\def\1{\bm{1}}
\DeclareMathAlphabet{\mathsfit}{\encodingdefault}{\sfdefault}{m}{sl}
\SetMathAlphabet{\mathsfit}{bold}{\encodingdefault}{\sfdefault}{bx}{n}
\newcommand{\dE}{\mathbb{E}}
 \newcommand{\dR}{\mathbb{R}}
 \newcommand{\cL}{\mathcal{L}}
\newcommand\rank{\mathrm{rank}}
\DeclarePairedDelimiterX{\kldiv}[2]{(}{)}{%
  #1\;\delimsize\|\;#2%
}
\DeclarePairedDelimiterX{\mi}[2]{(}{)}{%
  #1\;\delimsize ; \;#2%
}
\renewcommand{\leq}{\leqslant}
\newtheorem{lemma}{Lemma}%
\newtheorem*{theorem*}{Theorem}
\newtheorem*{definition*}{Definition}
\newtheorem{theoremA}{Theorem}
\newtheorem{definitionA}{Definition}
\title{Few-Shot Learning by Dimensionality\\ Reduction in Gradient Space}
\author{Martin Gauch,$^{1}$\thanks{correspondence to:\ \texttt{gauch@ml.jku.at}} \,
Maximilian Beck,$^{1}$
Thomas Adler,$^{1}$
Dmytro Kotsur,$^{2}$ 
Stefan Fiel,$^{2}$ \\
\textbf{Hamid Eghbal-zadeh,}$^{1}$
\textbf{Johannes Brandstetter,}$^{1}$\thanks{now at Microsoft Research} \,
\textbf{Johannes Kofler,}$^{1}$
\textbf{Markus Holzleitner,}$^{1}$ \\
\textbf{Werner Zellinger,}$^{3}$
\textbf{Daniel Klotz,}$^{1}$
\textbf{Sepp Hochreiter,}$^{1, 4}$ \textbf{and}
\textbf{Sebastian Lehner}$^{1}$ \\ \\
{$^1$~ELLIS Unit Linz and LIT AI Lab, Institute for Machine Learning}, {Johannes Kepler University Linz, Austria}\\
{$^2$~Anyline GmbH}, {Vienna, Austria}\\
{$^3$~Johann Radon Institute for Computational and Applied Mathematics}, {Austrian Academy of Sciences, Linz, Austria}\\
{$^4$~Institute of Advanced Research in Artificial Intelligence (IARAI), Vienna, Austria}\\
}
\newcommand{\hbv}{HB\kern-0.1emV} 
\begin{document}

\maketitle

\begin{abstract}
We introduce SubGD, a novel few-shot learning method which is based on the recent finding that stochastic gradient descent updates tend to live in a low-dimensional parameter subspace. 
In experimental and theoretical analyses, we show that models confined to a suitable predefined subspace generalize well for few-shot learning. 
A suitable subspace fulfills three criteria across the given tasks:\ it 
(a) allows to reduce the training error by gradient flow, 
(b) leads to models that generalize well, and 
(c) can be identified by stochastic gradient descent.
SubGD identifies these subspaces from an eigendecomposition of the auto-correlation matrix of update directions across different tasks.
Demonstrably, we can identify low-dimensional suitable subspaces for few-shot learning
of dynamical systems, which have varying properties described by one or few parameters of the analytical system description. 
Such systems are ubiquitous among real-world applications in science and engineering.
We experimentally corroborate the advantages of SubGD on three distinct dynamical systems problem settings, significantly outperforming popular few-shot learning methods both in terms of sample efficiency and performance.
\end{abstract}

\section{Introduction}
\label{introduction}

Many real-world applications cannot harness the full potential of deep learning when data are limited. 
Examples of such settings range from industrial sensor systems that need adjustments in new conditions to environmental models that must be adapted to a changing climate.
The shortage of data gives rise to the challenge of few-shot learning, i.e., learning from few data samples \citep{hospedales_meta-learning_2020}.

\begin{wrapfigure}[16]{r}{0.4\textwidth}
    \centering
    \vspace{-24.4pt}
    \includegraphics[width=0.9\linewidth]{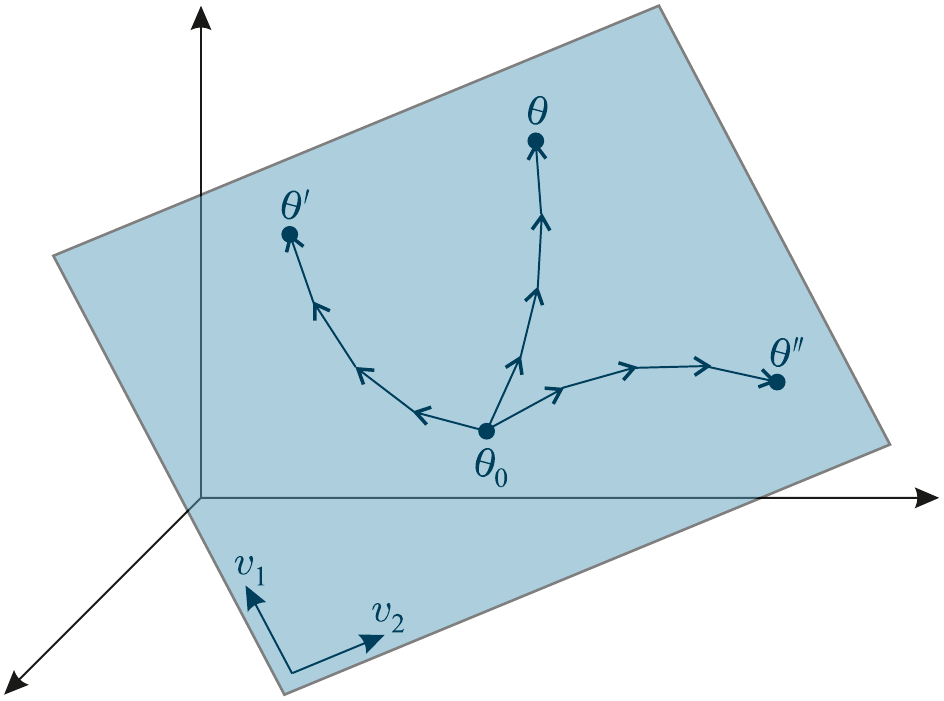}
    \caption{Schematic illustration of SubGD. Starting from an initialization $\theta_0$ in weight space, the update steps (arrows) of three different tasks with end points $\theta$, $\theta'$, and $\theta''$ live in a low-dimensional subspace (spanned by $v_1, v_2$).}
    \label{fig:subspace}
\end{wrapfigure}

We draw inspiration from recent insights into the training dynamics of optimizers based on gradient descent for deep neural networks. In particular, we exploit evidence that after a short burn-in period learning mostly takes place in a remarkably low-dimensional subspace of the parameter space \citep{gur-ari_gradient_2018,advani_high-dimensional_2020}.
Based on these insights, \citet{larsen_how_2021} showed that confining learning to certain low-dimensional subspaces --- even from the beginning of training --- does not deteriorate the performance or even improves it. 
Our work extends these results to the few-shot learning setting, where we demonstrate that stochastic gradient descent (SGD) benefits in terms of sample efficiency if it is restricted to a suitable subspace.

Our method, Subspace Gradient Descent (SubGD), can be outlined as follows.
Starting with a set of weights --- either a random initialization or an existing pre-trained model --- we train the model on each training task individually (Figure~\ref{fig:subspace}).
We identify the aforementioned subspace from an eigendecomposition of the resulting training trajectories. 
To evaluate on new, unseen tasks, we restrict the gradient update steps to the subspace spanned by the eigenvectors with the most dominant eigenvalues and scale the learning rates along these directions by their corresponding eigenvalues.
This can be interpreted as a regularization technique that draws information from the training tasks.
We obtain the learning rate and the number of update steps by fine-tuning on the training tasks or a separate set of hold-out tasks. 
Finally, we fine-tune on unseen test tasks to evaluate the performance. 
Intuitively, restricting the optimization to a subspace that is shared across tasks works well on new tasks, because the optimization is confined to directions that already yielded good generalization on previous tasks.
We further motivate our method theoretically and derive an empirical criterion for the applicability of SubGD (Section~\ref{methods}). 

Most popular few-shot learning benchmarks focus on datasets where different tasks correspond to classification problems that involve new classes of objects \citep[e.g.,][]{ravi_optimization_2017,dumoulin2021unified}.
While the challenge to adapt a model to a new category (i.e., class) in the dataset is highly relevant, we argue that in many practical situations new tasks do not necessarily represent new categories.
Rather, new tasks can stem from changes in the parameters of the data generation processes.
For example, the behavior of an electrical circuit changes as parameters of its components vary, while the underlying causal mechanisms remain unaffected.
This relation between tasks is known as parametric transfer \citep{teshima2020fewshot}.
We find that SubGD is successfully applicable to few-shot learning tasks where the assumption of parametric transfer across tasks appears plausible.
Section~\ref{exps} presents corresponding experimental results. 

The main contribution of this paper is a novel few-shot learning method which we call SubGD.
We show direct connections between SubGD and recent insights into the learning dynamics of SGD.
Moreover, we corroborate the good generalization abilities of SubGD theoretically and experimentally. 
We demonstrate the strengths of our method on a distinct set of few-shot learning problems related to dynamical systems.

\section{Related Work}
\paragraph{Few-Shot Learning}
Many few-shot learning methods rely on meta learning. 
Meta learning can be framed as a nested optimization scheme:\ an inner loop adapts a model to 
a given task, while an outer loop adapts the optimization 
strategy of the inner loop. Meta-learning methods can therefore be 
categorized by how inner and outer loops are implemented. 

Early meta-learning methods use 
\textit{recurrent neural networks} as the inner-loop optimizer 
and some variant of gradient descent on their 
parameters as the meta-learning strategy in the outer loop
\citep{hochreiter_learning_2001,andrychowicz_learning_2016,ravi_optimization_2017}. 
Recurrent networks, however, impose an order on their inputs, 
while learning tasks consist of unordered sets. Therefore, 
also architectures operating on sets are used
\citep{vinyals_matching_2016,ye_few-shot_2020}.

A popular class of few-shot-learning methods is based on 
\textit{fine-tuning}. While some methods fine-tune weights on top of a fixed backbone architecture \citep{adler2020cross,li2021improving,li2021universal}, other methods modify the whole neural network in the inner loop \citep{finn_model-agnostic_2017,nichol_first-order_2018,vuorio_multimodal_2019}. 
Here, the outer loop usually learns an initialization from where 
task optima can be found efficiently. SubGD is 
related to this class of algorithms as it is also based on fine-tuning. However, instead of learning an 
initialization in the outer loop, SubGD modifies the
fine-tuning optimizer. Thus, SubGD is complementary to methods focused on initialization and can naturally extend them.

\textit{Metric-based} methods use geometric relations between samples and employ learners such as nearest-neighbor classifiers as inner-loop optimizers. 
The outer loop adapts the model which 
embeds the samples into the metric space where the learner operates. 
Examples comprise \citet{snell_prototypical_2017}, \citet{vinyals_matching_2016}, \citet{koch_siamese_2015}, and \citet{sung_learning_2018}.

\paragraph{Learning in a Parameter Subspace}
The method put forth in this paper can be viewed as an instance of 
learning in a parameter subspace. 
\citet{raghu_rapid_2020} and \citet{zintgraf_fast_2019} 
constitute subspace-based adaptations of Model-Agnostic Meta Learning \citep[MAML;][]{finn_model-agnostic_2017} that adapt only a subset of the model parameters in the inner loop. 
An algorithmic approach toward the selection of a subspace for fine-tuning is brought forward by \citet{shrinkage}. They determine whether a given group of parameters is to be fine-tuned based on an associated shrinkage parameter which is adjusted during meta-training.
Our proposed method determines the subspace for fine-tuning via an eigendecomposition and requires no manual parameter selection or grouping.

Recent insights into the training dynamics of overparameterized neural networks 
suggest that after a short burn-in period SGD stays within a low-dimensional subspace of the parameter space
\citep{gur-ari_gradient_2018,advani_high-dimensional_2020}. 
\citet{li_measuring_2018} suggest that deep neural networks can be trained in 
random subspaces with far fewer degrees of freedom than the number of parameters, but with comparable performance. \citet{larsen_how_2021} showed that the 
performance can be improved if the subspace is chosen more carefully as the span of the dominant eigenvectors of the gradient descent steps. 
We provide evidence that restricting learning to such subspaces increases the sample efficiency of fine-tuning.  

The complexity of a neural network 
increases with the total number of trainable parameters
\citep{bartlett_vapnik-chervonenkis_2003,koiran_vapnik-chervonenkis_1998}. 
By restricting learning to a low-dimensional subspace, we obtain a model that is 
effectively regularized \citep{advani_high-dimensional_2020}. 
This prevents overfitting on the small number of samples in the few-shot task. 
This regularization technique is informed about the learning dynamics on other similar tasks.
We experimentally demonstrate that this informed regularization is superior to its uninformed version (see Section~\ref{ablations}). 

\paragraph{Learning to Precondition}
Our method can be can be regarded as a few-shot learning approach that learns a preconditioning matrix for the gradient descent update steps on test tasks.  \citet{park_meta-curvature_2019} learn a factorized block-diagonal preconditioning matrix. Effectively, this choice enforces a particular tensor-product structure of the preconditioning matrix which is related to specific model components (e.g., layers and filters). SubGD, in contrast, represents an architecture-independent preconditioning approach.
\citet{flenn_warp_2019} add so-called warp layers at specific parts of the base model. The weights of these warp layers are learned and act like ``slow'' weights that define a precondition matrix for the inner-loop optimization of the task-specific ``fast'' weights. 
Once trained, these layers provide a data-dependent metric in parameter space. The warp layers constrain the inter-dependencies among parameters, because the corresponding subspaces are restricted to the parameters of individual layers.

\section{Methods}
\label{methods}

In few-shot learning, gradient-based training of neural networks is subject to extensive estimation uncertainty due to the limited amount of training data.
We compensate for this corruption of the gradients by applying a suitable preconditioning matrix.
Before we formulate our method, we review a crucial property of SGD.
Let $\cL_S(\theta)$ be the empirical risk with respect to a random sample $S$ (e.g., a dataset or mini-batch) as a function of the network parameters $\theta$. 
Further, let $g = \nabla_\theta \cL_S(\theta) \in \dR^n$ be the stochastic gradient and let $C = \operatorname{\dE}[g g^\top]$
be its auto-correlation matrix, where the expectation is taken over the random sample.
If $C$ is invertible, then $\operatorname{\dE}[g^\top C^{-1} g] = n$ holds because
\begin{equation}
\label{eqn:expectation}
    \operatorname{\dE}[g^\top C^{-1} g]
    = \operatorname{\dE}[\operatorname{Tr}(g^\top C^{-1} g)]
    = \operatorname{Tr}(C^{-1} \operatorname{\dE}[g g^\top]) 
    = \operatorname{Tr}(I_n) = n.
\end{equation} 

Given a learning task and an auto-correlation matrix $C$ of full rank, SubGD chooses an update direction $d \in \dR^n$ that enforces the covariance structure given by Equation~\eqref{eqn:expectation}.
As we formally show in Section~\ref{derivation}, for a parameter vector $\theta \in \dR^n$ the resulting SubGD update rule is 
\begin{equation}
    \theta \gets \theta - \eta \, d, \quad \text{where} \quad d =  C g \label{eqn:subgd_update}
\end{equation}
and $\eta > 0$ denotes the learning rate.
Our few-shot learning approach estimates $C$ based on SGD trajectories obtained from fine-tuning on training tasks (see Section~\ref{procedure}). 
At test time, we apply this estimate in update rule~\eqref{eqn:subgd_update}.
Hereby, we assume that the covariance structure (a) is approximately constant in a limited region of the parameter space and (b) transfers across tasks.  

There is evidence that in the context of SGD with overparameterized 
models $C$ has only few non-negligible eigenvalues \citep{xie_diffusion_2021}. 
Further, $C$ is quadratic in the number of parameters and therefore infeasible for large-scale architectures.
Hence, we do a low-rank approximation via the truncated eigendecomposition $\hat C = V \Sigma V^\top$, where $\Sigma$ is the $r \times r$ diagonal matrix that contains the $r$ largest eigenvalues $\sigma_1, \dots, \sigma_r$ of $C$ in descending order, with $r \leq n$. The $n \times r$ matrix $V$ contains the corresponding eigenvectors in its columns. This confines the training trajectory of SubGD to the $r$-dimensional subspace of the most important update directions. The decomposition of $\hat C$ shows that SubGD projects updates into the span of the eigenvectors in $V$ and scales the learning rates along these directions with the corresponding eigenvalues in $\Sigma$.

Note that the auto-correlation matrix $C$ is uncentered. 
Alternatively, we could consider the (centered) covariance matrix, i.e., perform a principal component analysis (PCA) on the update directions. 
However, this would erase the main update direction from the preconditioning matrix.

The number of subspace dimensions $r$ is a hyperparameter of our method. 
For many practical applications, it is bounded by the size of available memory. 
Otherwise, it can be determined via the distribution of the eigenvalues. 
We find that $C$ typically has few dominant eigenvalues and a large number of very small ones. 
The eigendirections with small eigenvalues are practically irrelevant for learning the tasks. 
That is, the rank of $C$ is effectively low although it might be technically full.
\citet{roy_effective_2007} introduced a real-valued extension of the rank that captures this notion with the concept of Shannon entropy. 
The \textit{effective rank} of $C$ is defined as 
\begin{equation}
    \label{eq:erank_def}
    \operatorname{erank}(C) = \exp\!\Bigg(\!-\sum_{i=1}^n p_i \log p_i\Bigg) \quad \text{where} \quad p_i = \frac{\sigma_i}{\sum_{j=1}^n \sigma_j}.
\end{equation}
In our experiments, we use the effective rank to examine the dimensionality of the subspaces identified by SubGD.

\subsection{Derivation of the Update Direction}
\label{derivation}
In this section, we derive the update direction $d$ similarly to how \citet{pascanu_revisiting_2013} motivate natural gradient descent. 
We consider the constrained optimization problem
\begin{equation}
\min_{\Delta \theta} \cL_S(\theta + \Delta \theta) \quad\text{s.t.}\quad \Delta \theta^\top C^{-1} \Delta \theta = n.
\label{eqn:objective}
\end{equation}
We approximate the empirical risk by a first-order Taylor expansion
and construct the Lagrangian
\begin{equation}
    \cL_S(\theta) + \Delta \theta^\top g + \lambda\, (\Delta \theta^\top C^{-1} \Delta \theta - n)
    \label{eqn:lagrangian}
\end{equation}
with Lagrange multiplier $\lambda$. Setting the derivative with respect to $\Delta \theta$ in~\eqref{eqn:lagrangian} to zero and choosing $\lambda$ according to the constraint in Equation~\eqref{eqn:objective} gives
\begin{equation}
    \Delta \theta = -\sqrt{\frac{n}{g^\top C g}}\, C\, g.
    \label{eqn:subgd_direction}
\end{equation} 
Similar to \citet{pascanu_revisiting_2013}, we absorb the scalar factor in Equation~\eqref{eqn:subgd_direction} into the learning rate $\eta$. Consequently we choose $d = Cg$ for update rule~\eqref{eqn:subgd_update}.
While $g$ fulfills Equation~\eqref{eqn:expectation} only in expectation, $\Delta \theta$ is chosen to satisfy it exactly on every update. 
Note that this preconditioning of the gradient is somewhat inversely related to second-order methods like the Newton method or 
natural gradient descent \citep{amari_natural_1998}.
These methods multiply the gradient with the inverse of a second-order matrix, i.e., the Hessian matrix or the Fisher information matrix, respectively. 
This is because the goal here is different.
Conventional second-order methods optimize the update direction such that all parameters contribute equally toward minimizing the loss. 
SubGD, in contrast, adjusts learning rates along different directions such that learning is encouraged in important ones and inhibited in others.

\subsection{Procedure for Training and Testing}
\label{procedure}
To get a good estimate for $C$, we have to generate training 
trajectories on the training tasks. 
We refer to this step as fine-tuning, as we usually start these trajectories from a pre-trained initialization.
Since few-shot learning settings typically provide copious data for training tasks, we can draw many samples from each task for these fine-tuning procedures.
We found that in practice, it is often helpful to calculate the preconditioning matrix from aggregated rather than individual update steps --- i.e., from the global differences between fine-tuned and pre-trained weights.
When ample tasks are available, this allows for a more robust estimation of the preconditioning matrix.
We examine this effect empirically in Section~\ref{ablations}.

SubGD is not restricted to SGD optimization.
In fact, it can be coupled with any gradient-based optimizer by modifying the update steps proposed by the optimizer.

If validation tasks are available, we use them to determine good choices 
for the learning rate, the number of update steps.
If validation tasks are not available, we determine these hyperparameters on the training tasks. 
Finally, we turn to the test tasks to determine the few-shot performance.
Given a few samples (support set) of a new task, we fine-tune with SubGD and evaluate the performance on new samples (query set) of the same task.

\subsection{A Generalization Bound for SubGD}
\label{bound}

In this section, we show how our approach is underpinned by a generalization bound for neural networks. 
Consider update rule~\eqref{eqn:subgd_update} with $d = \hat Cg$ in the setting specified in Appendix~\ref{sec:generalization_bound_proof}.
Then, there exists a constant $\zeta > 0$ such that the risk $\cL(\theta)$ of models obtained by this update rule is, with probability at least $1-\delta$, bounded by
\begin{align}
\label{eq:complexity_of_est_error}
        \cL(\theta) \leq \cL_S(\theta)
    +\zeta\sqrt{\frac{\mathrm{rank}(\hat C)\left(\norm{\theta-\theta_0}_1+\log(l)\right)+\log(1/\delta)}{m}},
\end{align}
where $\norm{\cdot}_1$ is the $1$-norm, 
the vectors $\theta, \theta_0 \in \mathbb{R}^n$  are parameterizations of the neural network which are possibly learned on different tasks, $l$ is the Lipschitz constant of the loss function, and $m$ is the sample size. 
Note that $\zeta$ does not depend on the parameters of the neural network, nor does it depend on the dimension of the input.
See Appendix~\ref{sec:generalization_bound_proof} for more details and a proof based on arguments summarized and developed in~\cite{long_generalization_2020}.

As outlined in \citet{larsen_how_2021} and \citet{li_measuring_2018}, typical deep learning problems admit to use a low-rank preconditioning matrix without increasing $\cL_S$.
Equation~\eqref{eq:complexity_of_est_error} suggests that such a matrix can improve generalization.
If $\hat C$, however, is chosen such that $\rank(\hat C)$ becomes too small, then $\cL_S(\theta)$ will increase in turn. Thus, whether a reduction of $\rank(\hat C)$ is beneficial depends on the specific learning problem at hand (see Sections~\ref{sec:sinus} and~\ref{sec:limitations}).
Besides the existence of one low-dimensional subspace containing models with low generalization error, it is crucial that we can identify this subspace.
This can be assured if fine-tuning on each task individually always stays within the same subspace.
In practice, we can assess the dimensionality of a given subspace by means of the effective rank of $\hat C$.

In summary, if the approximation error of the models obtainable by our method is not larger than that of standard SGD \citep[as is, for example, suggested by][]{larsen_how_2021}, then Equation~\eqref{eq:complexity_of_est_error} shows that our method can be expected to obtain a small error for test tasks in the few-shot setting.

\section{Experiments}
\label{exps}
The experiments are structured as follows.
First, we examine SubGD on the sinusoid regression problem from \citet{finn_model-agnostic_2017}. Second, we demonstrate the quality of SubGD in two applications:\ estimating output voltages in a non-linear RLC (resistor, inductor, and capacitor) electrical circuit and adapting environmental models to a changing climate.
\subsection{Sinusoid Regression}
\label{sec:sinus}
The controlled nature of the sinusoid dataset makes it well suited for qualitative analyses and ablation studies. We take advantage of this setting to answer the following questions:
\begin{enumerate}
\itemsep0em 
\item How does SubGD benefit from different pre-trained initializations?
\item How does the subspace dimensionality affect the performance of SubGD?
\item How does the scaling of eigenvector directions by corresponding eigenvalues influence the learning behavior?
\item Would diagonal or random preconditioning matrices yield similar or better results?
\end{enumerate}

For the sinusoid experiments, a task is to predict the output of a sine function $f(x) = a\sin(x - b)$ that is parameterized by an amplitude $a$ and a phase $b$ from few samples.
Following \citet{finn_model-agnostic_2017}, we draw $a$ uniformly from the range $[0.1, 5.0]$, $b$ from $[0, \pi]$, and $x$ from $[-5.0, 5.0]$.
All experiments use a feed-forward neural network with two hidden layers of size 40 and ReLU activations.

\subsubsection{Comparison Against Reference Approaches}
We compare against the few-shot learning methods MAML \citep{finn_model-agnostic_2017}, Reptile \citep{nichol_first-order_2018}, MetaSGD \citep{li_metasgd_2017}, MT-Net \citep{lee_gradbased_2018}, and Meta-Curvature \citep{park_meta-curvature_2019}.
MetaSGD, MT-Net, and Meta-Curvature are of particular interest, since these also constitute methods of learning a preconditioning matrix.

\begin{table}[t]
\caption{MSE on the sinusoid dataset for 5-, 10-, and 20-shot regression after converged fine-tuning. SubGD and MAML start their fine-tuning procedure from the same initialization (50000 iterations of MAML training). Bold values indicate methods that are not significantly worse than the best method ($\alpha=0.01$). SubGD significantly outperforms all other methods for smaller support sizes.}
\label{tab:sinusoid-mse}
\begin{center}
\begin{tabularx}{0.65\linewidth}{Xrrrrrr}
\toprule
\multirow{2}{*}{Method} & \multicolumn{2}{c}{5-shot} & \multicolumn{2}{c}{10-shot} & \multicolumn{2}{c}{20-shot} \\
 & mean & median & mean & median & mean & median \\
\midrule
MAML & 0.301 & 0.153 & 0.102 & 0.049 & \bfseries 0.022 & \bfseries 0.013 \\
Reptile & 0.673 & 0.468 & 0.103 & 0.049 & \bfseries 0.013 & \bfseries 0.008 \\
MetaSGD & 0.410 & 0.230 & 0.249 & 0.043 & 0.143 & 0.086 \\
MT-net & 0.203 & 0.083 & 0.069 & 0.024 & \bfseries 0.025 & \bfseries 0.010 \\
Meta-Curvature & 0.989 & 0.406 & 0.678 & 0.219 & 0.118 & 0.060 \\
SubGD & \bfseries 0.065 & \bfseries 0.031 & \bfseries 0.028 & \bfseries 0.016 & 0.023 & 0.017 \\
\bottomrule
\end{tabularx}
\end{center}
\end{table}

Approaches such as MAML and Reptile that learn an initialization are orthogonal to SubGD.
Starting from the initialization, these methods fine-tune with conventional gradient descent.
In contrast, SubGD does not affect the initialization but uses a fine-tuning procedure that is informed by training tasks.
We thus focus our experimental investigation on the final adaptation to a test set and start the fine-tuning on test tasks for both MAML and SubGD with the same pre-trained initialization --- the one learned by MAML after 50000 iterations.
MetaSGD, MT-Net, and Meta-Curvature, on the other hand, learn both an initialization and a preconditioning matrix for the fine-tuning procedure (albeit a more restricted one than SubGD).
For all methods, we hyperparameter-tune the final learning rate and number of update steps to minimize the average mean squared error (MSE) across 100 tasks.
Since prior work commonly reports the performance after very few update steps, we also provide the MSE after a single update step in Appendix~\ref{app:sinus}. 
Our method, however, focuses on the performance after convergence.
We argue that outside of extremely time- or compute-restricted scenarios, this is the more relevant criterion.

For small support sets, SubGD performs best (Table~\ref{tab:sinusoid-mse}, Figure~\ref{fig:sinusoid-baselines-pretrain}~right).\footnote{Unless stated otherwise, all error bars report 95\% confidence intervals calculated across 100 different tasks. We check for significance with two-sided Wilcoxon signed-rank tests.}
With increasing support size, the predictions of MAML, Reptile, and MT-net improve, until their results are similar to SubGD at support size 20.
The comparably worse performance of the remaining baselines appears to stem from their less flexible adaptation at test time.
MAML and SubGD can be fine-tuned for hundreds of update steps at a low learning rate (even when MAML is trained with few inner-loop steps). The results in Table~\ref{tab:sinusoid-mse} show that MetaSGD performs a good first-step update, but does not improve further when fine-tuned until convergence with a low learning rate.

\begin{figure}
    \centering
    \includegraphics[width=\linewidth]{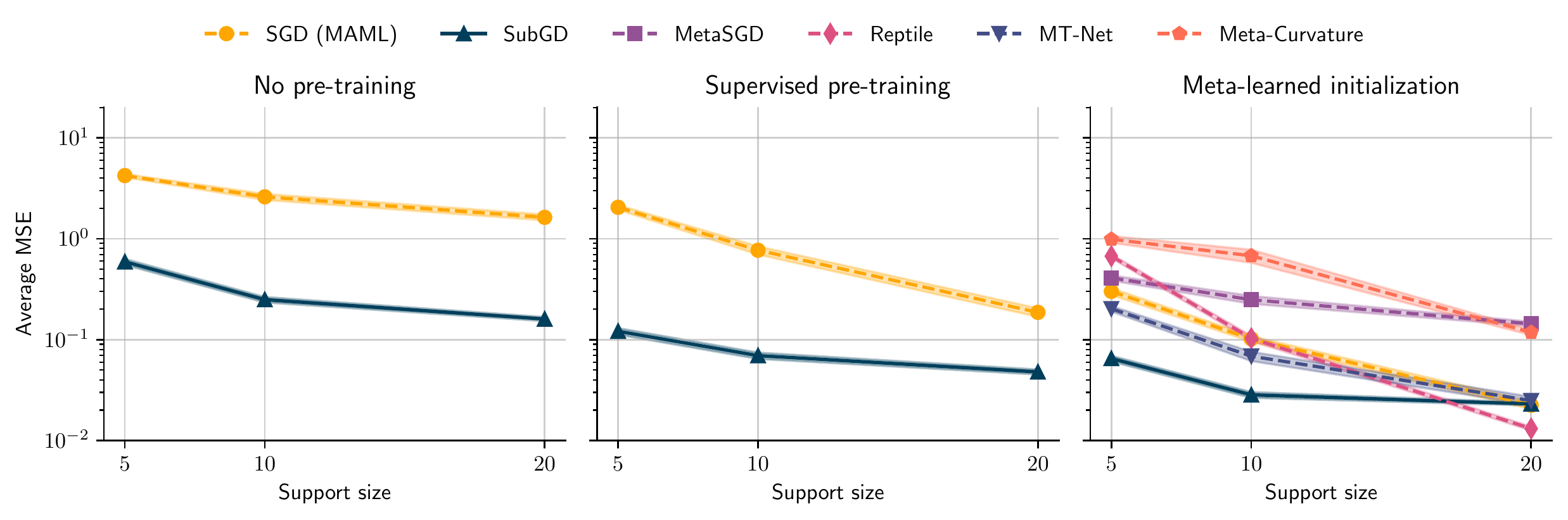}
    \caption{Average MSE with increasing support size on the sinusoid dataset. We show results for SubGD and benchmark methods, starting from different pre-training strategies. No pre-training (left):\ SGD and SubGD start from a random initialization. Supervised (middle):\ fine-tuning from an initialization learned on copious data from a single sinusoid configuration. Meta-learned initialization (right):\ SGD and SubGD fine-tune from a MAML initialization. Hence, in this context, SGD is equivalent to MAML. The remaining baselines learn their solutions from scratch. We tune the fine-tuning learning rate and number of update steps. Regardless of the initialization, SubGD is most sample efficient.}
    \label{fig:sinusoid-baselines-pretrain}
    \vspace{-10pt}
\end{figure}

\subsubsection{Ablations}
\label{ablations}

\begin{wrapfigure}[18]{r}{0.4\textwidth}
    \centering
    \vspace{-20pt}
    \includegraphics[width=\linewidth]{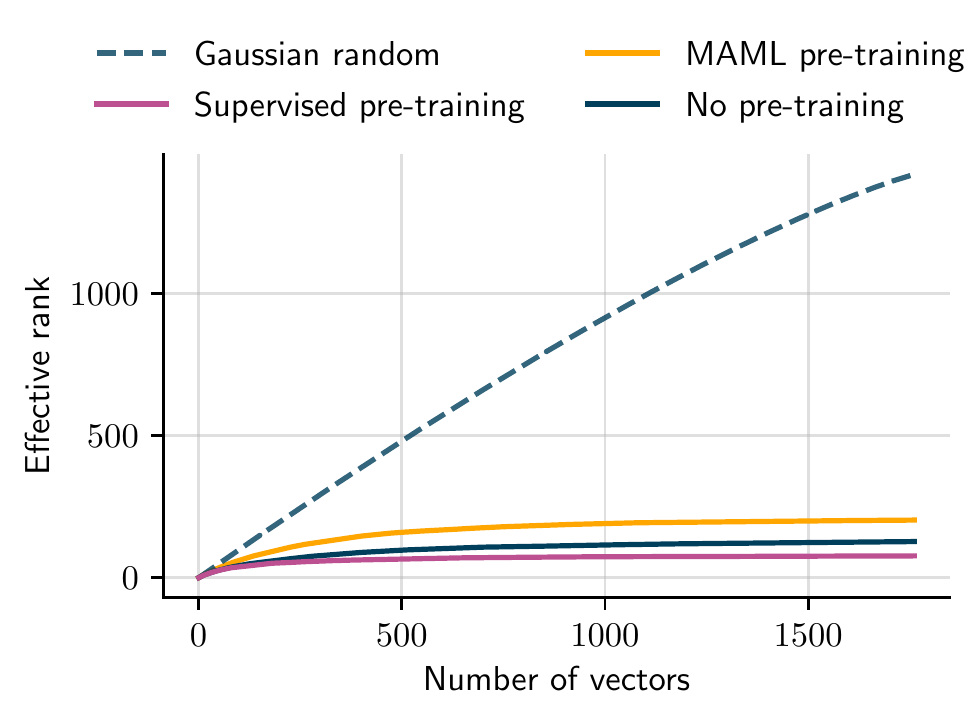}
    \caption{Effective rank of the differences between the pre-trained initialization and up to 1761 fine-tuned model parameters for different initialization strategies (1761 is the number of model parameters). The dashed line shows the effective rank of random Gaussian vectors.}
    \label{fig:sinusoid-erank}
\end{wrapfigure}

\paragraph{Pre-training Strategy}
As noted in Section~\ref{procedure}, SubGD can be combined with methods such as MAML that learn an initialization.
Figure~\ref{fig:sinusoid-baselines-pretrain} illustrates the effect of different pre-training strategies on the performance.
We compare SubGD and SGD after fine-tuning from a random initialization (Figure~\ref{fig:sinusoid-baselines-pretrain}~left), an initialization that is learned on a single sinusoid configuration ($a=2.5$, $b=\pi / 2$; center), and a MAML initialization (right).
For simplicity, we will refer to pre-training on a single configuration as ``supervised pre-training''.
More informed pre-training improves the predictions, and a meta-learned initialization works best.
SubGD is most sample efficient and yields lower errors than SGD-based fine-tuning.
In fact, SubGD is already competitive with few-shot learning baselines when we pre-train on just a single task or skip pre-training entirely.

Figure~\ref{fig:sinusoid-erank} shows that the effective rank saturates after a sufficient number of tasks.
All further tasks can be solved within the subspace spanned by previous tasks.
This observation holds for all pre-training strategies. 
Based on these results, the strong performance of SubGD is expected:\
when few samples are available, the subspace leads to solutions that generalize better to unseen query samples.

\paragraph{Subspace Size and Weighting}
For supervised learning settings outside of few-shot learning, \citet{larsen_how_2021} found that there exists a sweet spot in the subspace dimensionality where stochastic gradient descent yields the best performance.
In our setting, the effective dimensionality is further impacted by the weighting of dimensions by their eigenvalues.
Since many eigenvalues are close to zero, the weighting effectively reduces the dimensionality of the parameter subspace.
In this experiment, we vary the number of subspace dimensions between 2 and 1761 (the total number of model parameters) for SubGD with and without weighting.

For SubGD without weighting, we observe a distinct performance optimum at subspace size 8, which deteriorates as the subspace dimensionality further increases  (Figure~\ref{fig:sinusoid-subspacesize}).
When we additionally weight the subspace directions by their eigenvalues, the prediction quality maintains its high level as subspace size increases up to the full model size.
The distribution of eigenvalues in Figure~\ref{fig:sinusoid-eigenvalues} explains this behavior.
Notably, the eigenvalue size rapidly decreases after the first few dimensions.
Consequently, SubGD fine-tuning will barely move in directions that correspond to low eigenvalues, which constitutes an automatic regularization entirely learned from training information.
For practical applications, this is an important result, as one does not need to tune the subspace size. Instead, we can rely on SubGD to restrict the subspace to the ideal effective subspace dimensionality.

\begin{wrapfigure}[22]{r}{0.65\textwidth}
    \centering
    \vspace{-20pt}
    \includegraphics[width=\linewidth]{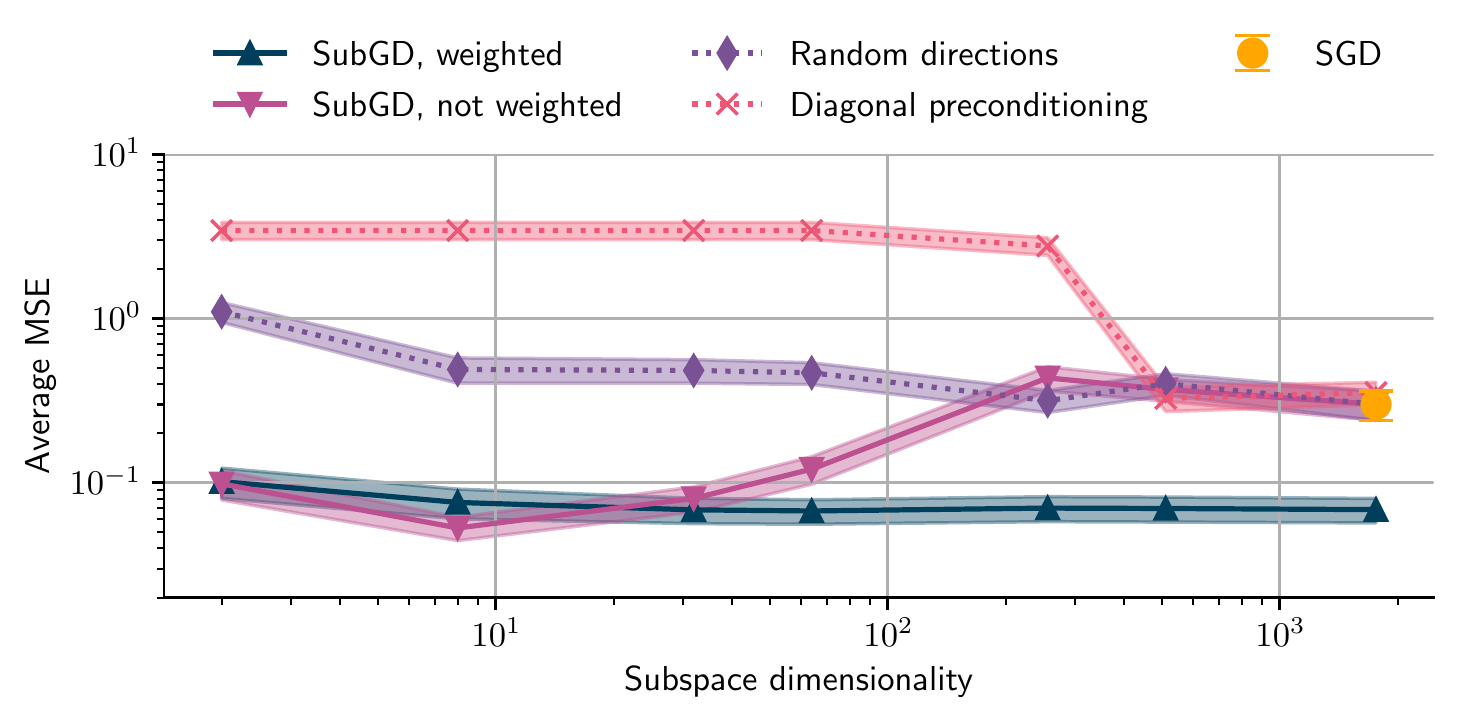}
    \caption{Average MSE across 200 tasks at support size 5 with increasing subspace dimensionality. We compare SubGD with and without weighting by eigenvalues, random subspaces, and diagonal preconditioning based on squared weight differences. The yellow marker indicates the reference performance of fine-tuning without any preconditioning. For SubGD, we added dimensions along the directions of the highest eigenvalues. For diagonal preconditioning, we analogously added the highest diagonal entries.}
    \label{fig:sinusoid-subspacesize}
\end{wrapfigure}

\paragraph{Subspace Complexity}
Methods such as MT-net and MetaSGD use diagonal preconditioning matrices (i.e., per-parameter learning rates), while SubGD leverages more complex subspace structures.
This raises the question whether it is necessary to use the entire SubGD preconditioning matrix, or whether a simpler and computationally cheaper diagonal matrix would suffice.
We thus compare our method against two simpler variants.
First, we directly use the mean squared differences between fine-tuned and pre-trained models as a preconditioning matrix.
In this case, we skip the eigendecomposition of weight differences and use the squared differences as per-parameter learning rates.
Effectively, this results in axis-aligned subspaces, where we subsequently zero out the diagonal entries with the lowest squared differences.
Second, we restrict the gradient descent to random directions --- a procedure motivated by \citet{larsen_how_2021}, who successfully applied this method in a single-task many-shot setting.
Since there might exist a sweet spot in the subspace dimensionality of the different preconditioning variations, we compare the approaches at different numbers of dimensions.
For SubGD, we restrict the dimensionality by subsequently removing directions with the lowest eigenvalues.
Figure~\ref{fig:sinusoid-subspacesize} shows that the non-axis-aligned nature of SubGD subspaces is crucial to the success of fine-tuning.
While SubGD works well at all evaluated subspace dimensionalities, the diagonal and random preconditioning are, at best, as good as SGD fine-tuning that does not leverage information from training tasks beyond the initialization.
These results are in line with the baseline evaluation against MetaSGD, where SubGD yields more accurate predictions.

As noted in Section~\ref{procedure}, there exist multiple options to calculate the preconditioning matrix --- either from global differences between pre-trained and fine-tuned weights, or from more fine-grained update steps (e.g., epoch-wise or individual gradient steps).
In practice, we found the global differences to be more reliable.
With global differences, SubGD achieves an average MSE of 0.067, while SubGD with epoch-wise differences results in a significantly higher MSE of 0.182.
Here, one epoch corresponds to four gradient update steps.
The reason for this effect appears to be related to mini-batch noise.
While epoch-wise steps provide only noisy estimates of the ideal optimization direction, the global differences are more robust to noise from the mini-batches and the training data.
\clearpage

\begin{figure}[h]
\centering
\begin{minipage}[c]{0.42\textwidth}
    \centering
    \vspace{-10pt}
    \includegraphics[width=\linewidth]{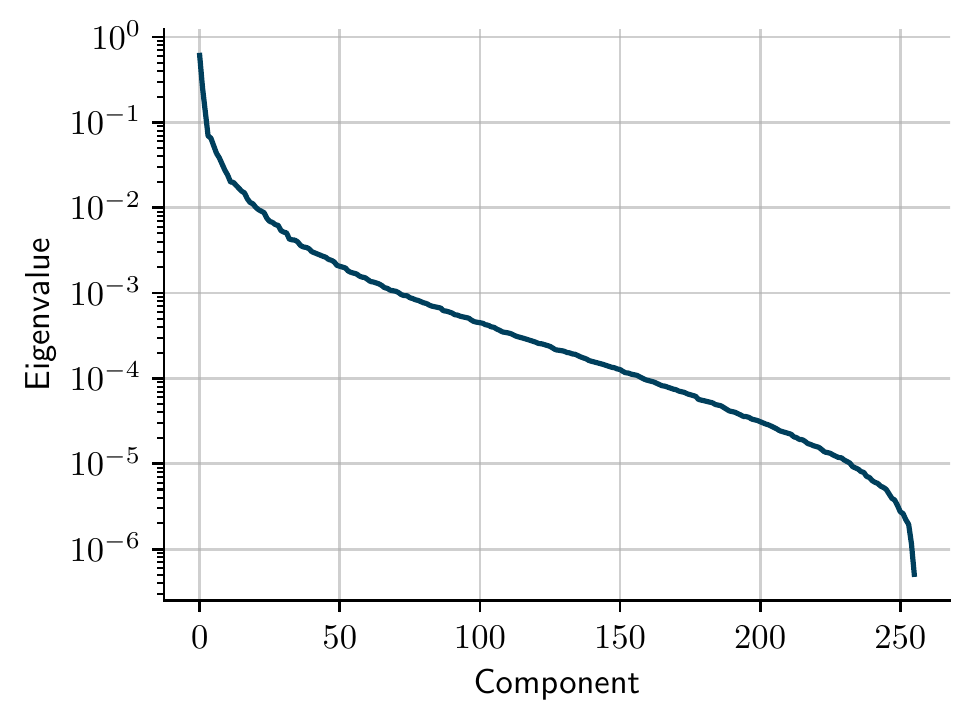}
    \vspace{-1pt}
    \caption{Eigenvalue distribution for a preconditioning matrix generated from 256 different sinusoid regression tasks.}
    \label{fig:sinusoid-eigenvalues}
\end{minipage}\hspace{4em}
\begin{minipage}[c]{0.42\textwidth}
    \centering
    \includegraphics[width=\linewidth]{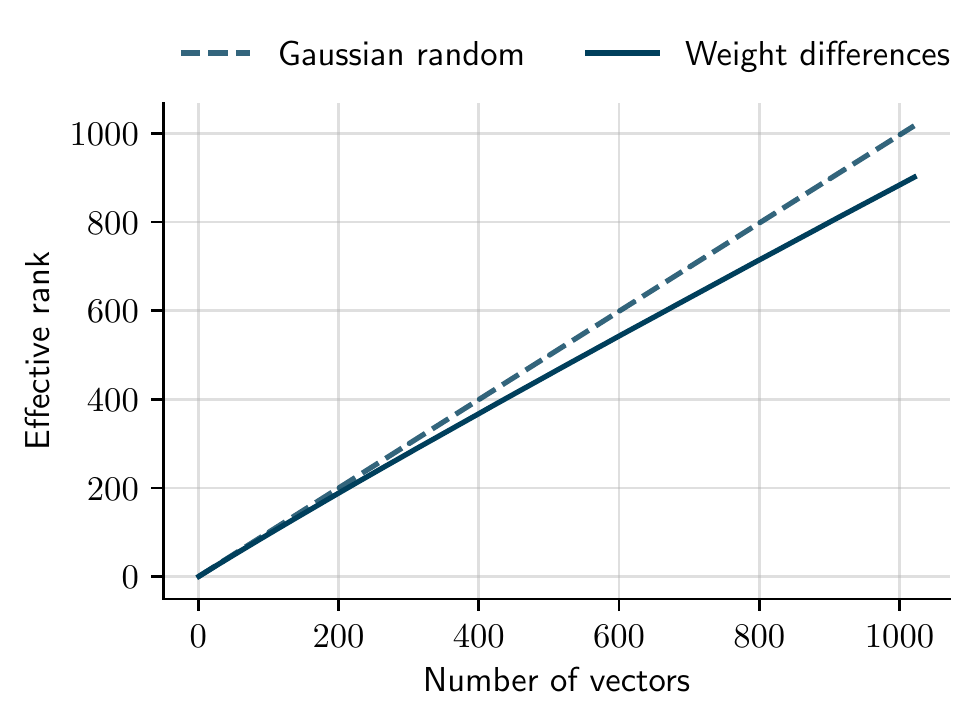}
    \caption{Effective ranks of the vectors that correspond to training trajectories of miniImageNet tasks (solid) and for random Gaussian vectors (dashed). The trained model is a 4-layer Convolutional Neural Network.}
    \label{fig:miniimmage-erank}
\end{minipage}
\end{figure}

\subsection{Limitations}
\label{sec:limitations}

While our experiments on the toy sinusoid dataset and the real-world applications show how well SubGD performs on dynamical systems, we also believe that it is important to outline the limitations of our method.
By design, SubGD only works if we can identify a shared subspace on the training tasks that allows solving unseen tasks at test time.
Empirically, this requirement is often fulfilled, as our experiments show empirically (e.g., Figure~\ref{fig:sinusoid-erank}).
The tasks in these experiments share the dynamics that generate the data. Differences between individual tasks result from the variation of one or few parameters in the analytical description of the dynamical systems.
In contrast, tasks that are commonly encountered in computer vision are not obviously related in this parametric way. 
Here, different tasks typically correspond to entirely new classes of objects. The popular miniImageNet few-shot learning benchmark \citep{vinyals_matching_2016, ravi_optimization_2017}, for instance, samples tasks as classification problems with a varying set of classes.
Indeed, our experiments indicate that the effective rank of weight differences from different miniImageNet tasks (5-way 5-shot classification) does not flatten with increasing numbers of tasks (Figure~\ref{fig:miniimmage-erank}).
In other words, we cannot identify a shared low-dimensional subspace between all tasks and therefore cannot expect SubGD to work in this setting.
This observation is in line with the experimental results on miniImageNet, which indicate that applying SubGD with a 1024-dimensional subspace (based on the same vectors as Figure~\ref{fig:miniimmage-erank}) yields a worse accuracy than normal SGD.

\subsection{Non-Linear RLC}
\label{rlc}
In this experiment, we consider non-linear electric circuits where a resistance $R$, an inductance $L$, and a capacitance $C$ are connected in series.
These circuits are one particular instance of physical systems that exhibit oscillatory behavior.
They are ubiquitous in numerous devices, such as mobile phones, radio, or television receivers.
Specifically, we adapted the setting from \citet{forgione_si_2021}, where the goal is to approximate the dynamics of an RLC circuit with unknown parameters.
We translate this setting to a few-shot learning context.
Given data from prior parameter variations (i.e., training tasks), our goal is to adapt to new variations (i.e., test tasks) with only few data points.
Such adaptations are necessary when the system's operating conditions change, the system is subject to wear and tear, or when the system behavior deviates from the desired nominal behavior due to manufacturing inaccuracies.

Following these considerations, we define a task as one parameter instantiation of the dynamical system and sample the parameters $R$, $L$, and $C$ uniformly. 
We generate our training and test data by simulating the system response to different input signals.
Like \citet{forgione_si_2021}, we model the dynamics of the system with a feed-forward neural network and use the forward-Euler discretization scheme for training. 
Appendix~\ref{sec:appendix:rlc} provides further setup details.

We assess the ability of SubGD to adapt the neural network to new tasks.
We compare SubGD against fine-tuning of a pre-trained model, the few-shot learning methods of first-order MAML \citep[foMAML;][]{finn_model-agnostic_2017}, and Reptile, as well as two further methods that employ a preconditioning matrix (MetaSGD and Meta-Curvature). 
Additionally, we provide Jacobian Feature Regression \citep[JFR;][]{forgione_si_2022} as a baseline. 
JFR is a method for computationally efficient transfer learning through linearization of a neural network \citep{maddox_linearized_2021}.

Figure~\ref{fig:rlc-support-vs-mse} and Table~\ref{tab:rlc-mse} show the median MSE of SubGD and SGD across all test tasks using initializations from supervised pre-training, Reptile and foMAML. 
We observe that SubGD consistently achieves the best performance for all initializations and support sizes.
For larger support sizes, the initialization turned out to be less important, as SubGD reaches almost the same median MSE for all pre-training strategies.
JFR cannot keep up with the other methods and even deteriorates the model performance when applied in the scarce-data regime.
These results lead to the conclusion that SubGD improves generalization to unseen test tasks.

\begin{figure}[t]
    \includegraphics[width=\linewidth]{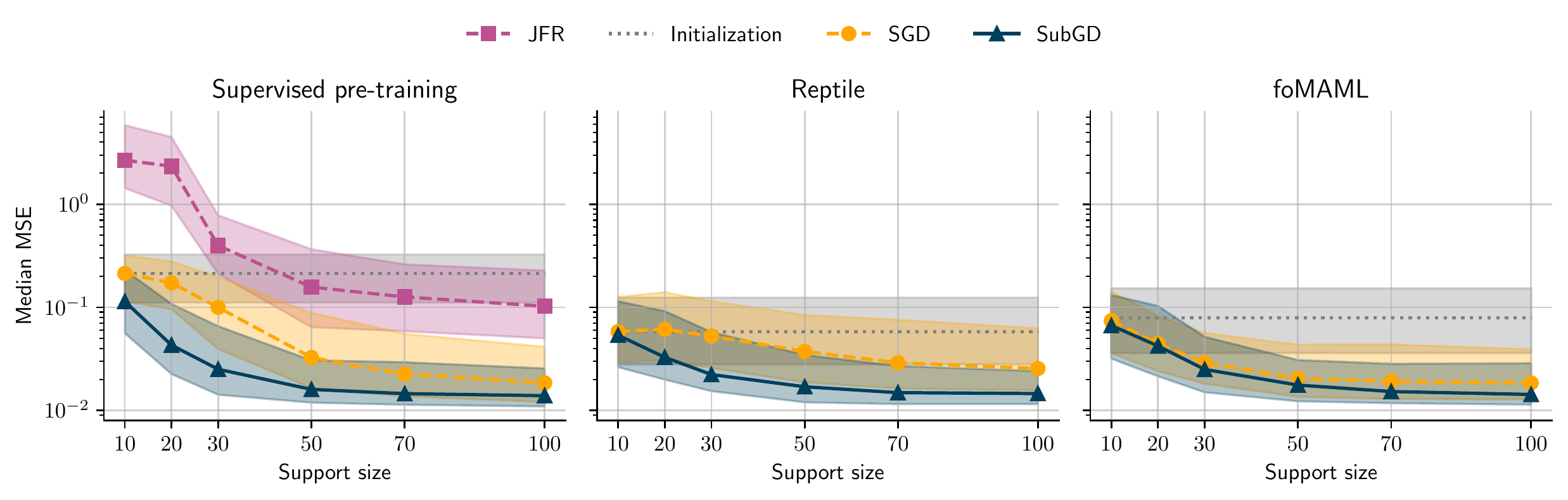}
    \caption{Median MSE with increasing support size on the RLC dataset. We compare SDG and SubGD fine-tuning starting from different initializations, as well as JFR. The performance of the JFR weight initialization is virtually identical to that of supervised pre-training on a single system and hence omitted. We report the median MSE due to outliers. Error bars show the first and third quartile over 256 test tasks. Already with plain supervised pre-training, SubGD is competitive with SGD fine-tuning of more informed initializations.}
    \label{fig:rlc-support-vs-mse}
\end{figure}
\begin{table}[t]
\caption{Median MSE across 256 tasks for increasing support size on the RLC dataset. Bold values indicate methods that are not significantly worse than the best method $(\alpha=0.01)$. For all support sizes, the best model employs SubGD.}
\label{tab:rlc-mse}
\begin{center}
\begin{tabularx}{0.8\linewidth}{Xrrrrrr}
\toprule
\multirow{1}{*}{Method} & \multicolumn{1}{c}{10-shot} & \multicolumn{1}{c}{20-shot} & \multicolumn{1}{c}{30-shot} & \multicolumn{1}{c}{50-shot} & \multicolumn{1}{c}{70-shot} & \multicolumn{1}{c}{100-shot}\\
\midrule
JFR & 2.673 & 2.334 & 0.400 & 0.157 & 0.126 & 0.103  \\
MetaSGD             &0.072 &0.048 &0.035 &0.021 &0.023 &0.022 \\
Meta-Curvature      &0.062 &\bfseries0.038 &0.029 &0.020 &0.018 &0.017 \\
Supervised+SGD      &0.213 &0.173 &0.100 &0.033 &0.023 &0.019 \\
Supervised+SubGD    &0.114 &\bfseries0.043 &\bfseries0.025 &\bfseries0.016 &\bfseries0.015 &\bfseries0.014 \\
foMAML              &0.074 &0.045 &0.029 &0.020 &0.020 &0.019 \\
foMAML+SubSGD       &0.067 &0.042 &\bfseries0.025 &\bfseries0.018 &\bfseries0.015 &\bfseries0.014 \\
Reptile             &0.058 &0.061 &0.053 &0.037 &0.029 &0.026 \\
Reptile+SubGD       &\bfseries0.054 &\bfseries0.033 &\bfseries0.022 &\bfseries0.017 &\bfseries0.015 &\bfseries0.015 \\
\bottomrule
\end{tabularx}
\end{center}
\end{table}

\begin{figure}
    \centering
    \includegraphics[width=\linewidth]{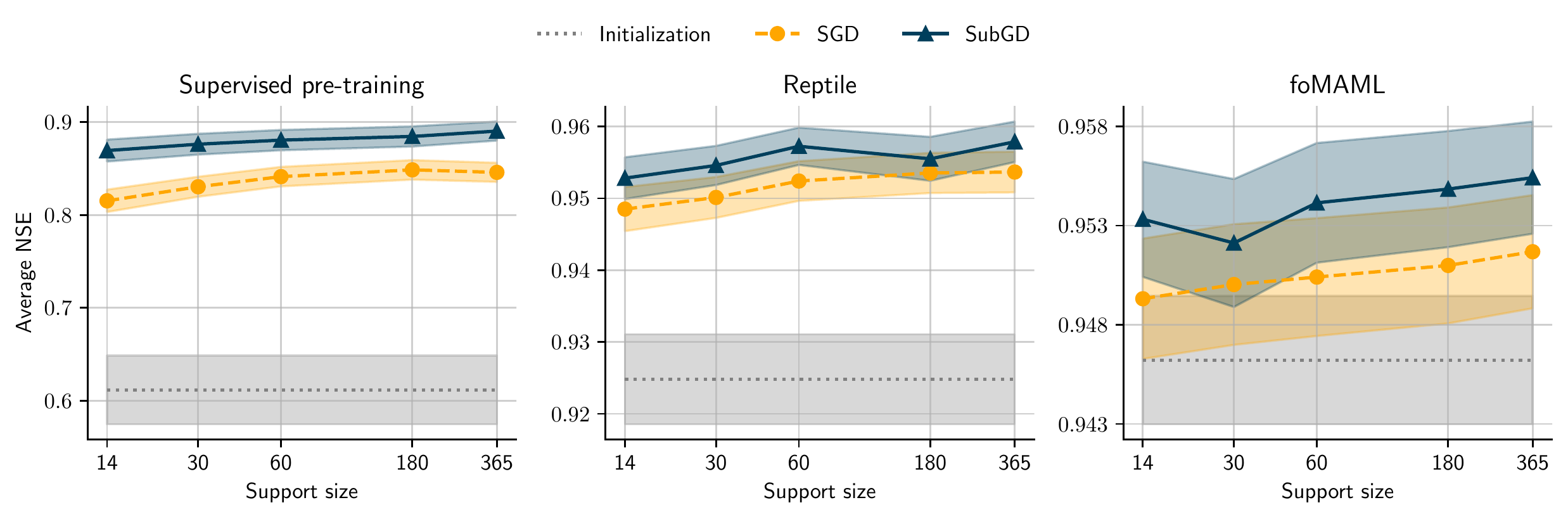}
    \caption{Average of the NSE metric (defined for $(-\infty, 1]$, where values closer to one are desirable) across 225 test tasks with increasing support size for SGD and SubGD optimization after different pre-training methods. Note the different limits of the y-axes. For all three initializations, SubGD outperforms SGD fine-tuning.}
    \label{fig:hydro-nse}
\end{figure}

\begin{table}[t]
\caption{NSE values on the hydrology dataset for increasing support size. We compare fine-tuning with SGD and SubGD after conventional supervised pre-training, MetaSGD, Meta-Curvature, foMAML, and Reptile training. Higher values are better, and bold values indicate methods that are not significantly worse than the best method ($\alpha=0.01$). For all support sizes, the best model employs SubGD.}
\label{tab:hydro-nse}
\begin{center}
\begin{tabularx}{\linewidth}{Xrrrrrrrrrr}
\toprule
\multirow{2}{*}{Method} & \multicolumn{2}{c}{14-shot} & \multicolumn{2}{c}{30-shot} & \multicolumn{2}{c}{60-shot} & \multicolumn{2}{c}{180-shot} & \multicolumn{2}{c}{365-shot} \\
 & mean & median & mean & median & mean & median & mean & median & mean & median \\
\midrule
MetaSGD & 0.946 & 0.971 & 0.946 & 0.971 & 0.948 & 0.971 & 0.951 & 0.972 & 0.952 & 0.972 \\
Meta-Curvature & 0.933 & 0.962 & 0.933 & 0.962 & 0.931 & 0.960 & 0.935 & 0.962 & 0.936 & 0.962 \\
Supervised+SGD & 0.815 & 0.875 & 0.830 & 0.889 & 0.841 & 0.896 & 0.849 & 0.905 & 0.846 & 0.900 \\
Supervised+SubGD & 0.869 & 0.935 & 0.876 & 0.934 & 0.881 & 0.939 & 0.885 & 0.941 & 0.890 & 0.938 \\
foMAML & 0.949 & 0.969 & 0.950 & 0.971 & 0.950 & 0.971 & 0.951 & 0.972 & 0.952 & 0.972\\
foMAML+SubGD & \bfseries 0.953 & \bfseries 0.972 & \bfseries 0.952 & \bfseries 0.972 & 0.954 & 0.972 & 0.955 & 0.973 & 0.955 & 0.972 \\
Reptile & 0.948 & 0.971 & 0.950 & 0.972 & 0.952 & 0.973 & 0.954 & 0.975 & 0.954 & 0.975 \\
Reptile+SubGD & \bfseries 0.953 & \bfseries 0.973 & \bfseries 0.955 & \bfseries 0.973 & \bfseries 0.957 & \bfseries \bfseries 0.976 & \bfseries 0.955 & \bfseries 0.977 & \bfseries 0.958 & \bfseries 0.977 \\
\bottomrule
\end{tabularx}
\end{center}
\end{table}

\subsection{Climate Change Adaption}
Climate change poses an increasing risk to human life \citep{ipcc2022climate}.
A globally warming climate has distinct effects on local weather phenomena, as it impacts inherent properties of the atmosphere \citep{trenberth2014global, shepherd2014atmospheric}.
These changes have major consequences on environmental modeling.
Existing local prediction systems will need adjustments to new behavior induced by climatic changes \citep{milly2008stationarity}.
Naturally, the amount of data available to adapt models is scarce at the onset of these changes.
Few-shot learning techniques are therefore key to the future success of environmental models based on deep learning which witness increasing adoption.
This experiment examines the potential use of SubGD for adaptation to new environmental conditions.
We consider rainfall--runoff simulation models.
Given meteorological data, these models predict river discharge, which is used for flood prediction.

A powerful mechanism to tackle environmental changes that were previously unseen at a given place is the paradigm of ``trading space for time'' \citep{singh2011trading, peel2011hydrological}.
Here, the idea is that future behavior may be new to one location, but not to other points in the world.
In other words, we assume that future behavior is similar to current or past behavior elsewhere.
We can therefore use ample meteorological data from other regions as substitutes for as of yet unknown possible future conditions at a location of interest.
The performance of few-shot learning approaches in this setting informs us about useful model adaptation strategies once local changes actually occur.

We use SubGD to adapt an LSTM rainfall--runoff model to the conditions from other locations. 
Starting from different pre-trained initializations (supervised pre-training, foMAML, Reptile, MetaSGD, and Meta-Curvature), we apply SGD and SubGD fine-tuning on support sets of varying size.
The detailed setup is outlined in Appendix~\ref{sec:appendix:hydrology}.

The results in Figure~\ref{fig:hydro-nse} and Table~\ref{tab:hydro-nse} show the Nash--Sutcliffe Efficiency (NSE) metric of MetaSGD and Meta-Curvature as well as SubGD and SGD after fine-tuning from initializations learned by Reptile, foMAML, and conventional supervised pre-training. 
The NSE is the most common evaluation criterion in hydrology and is defined as the $R^2$ between ground truth and predictions. 
Pre-training techniques that already consider the future goal of quick adaptation (foMAML, Reptile) ultimately yield better fine-tuned models.
SubGD is the most sample-efficient method and achieves significantly higher mean and median NSEs across all support sizes --- no matter which initialization we start with.
With the more sophisticated pre-training approaches (foMAML, Reptile), the differences become small, largely because both SGD and SubGD fine-tuning achieve good predictions with limited room for improvement.
Despite its comparably poor adaptation capabilities, supervised pre-training on data from the location itself is an important comparison, because it closely matches the machine learning setups that are currently deployed by practitioners.

\section{Conclusion and Outlook}
This contribution has introduced SubGD, a novel few-shot learning method that restricts gradient descent to a suitable subspace to increase sample efficiency.
We have built upon the fact that, on large datasets, SGD updates tend to live in a low-dimensional parameter subspace.
SubGD modifies update steps to leverage this property for few-shot learning in a way that is informed by training tasks.
Our update rule projects these steps into a low-dimensional subspace and rescales each direction.
An advantage of SubGD lies in its flexibility.
We can apply it on top of any initialization-based few-shot learning method or combine it with any gradient-based optimizer.
Furthermore, we have underpinned our method by a bound that connects the rank of the preconditioning matrix to the generalization abilities of SubGD. 

Our empirical investigations on problem settings from engineering and climatology show that SubGD is well-suited for dynamical systems applications.
In these settings, different tasks often stem from changes in the parameters of the underlying dynamical system.
Empirically, such changes translate into low-dimensional subspaces for neural networks. 

Future research may examine other approaches to determine subspaces, which might help to broaden the applicability of SubGD toward other few-shot learning settings where subspaces are harder to identify (e.g., image classification).
For instance, one could already incentivize low-dimensional, shared subspaces when fine-tuning on the training tasks.
Further, one could use methods such as autoencoders that extract non-linear manifolds instead of linear subspaces from the fine-tuning trajectories.

\section{Reproducibility}
Code and data to reproduce our experiments are available at \url{https://github.com/ml-jku/subgd}.

\subsubsection*{Acknowledgments}
The ELLIS Unit Linz, the LIT AI Lab, and the Institute for Machine Learning are supported by the Federal State Upper Austria. IARAI is supported by Here Technologies. We thank the projects AI-MOTION (LIT-2018-6-YOU-212), AI-SNN (LIT-2018-6-YOU-214), DeepFlood (LIT-2019-8-YOU-213), Medical Cognitive Computing Center (MC3), INCONTROL-RL (FFG-881064), PRIMAL (FFG-873979), S3AI (FFG-872172), DL for GranularFlow (FFG-871302), AIRI FG 9-N (FWF-36284, FWF-36235), and ELISE (H2020-ICT-2019-3 ID: 951847). We further thank Audi.JKU Deep Learning Center, TGW LOGISTICS GROUP GMBH, Silicon Austria Labs (SAL), FILL Gesellschaft mbH, Anyline GmbH, Google, ZF Friedrichshafen AG, Robert Bosch GmbH, UCB Biopharma SRL, Merck Healthcare KGaA, Verbund AG, Software Competence Center Hagenberg GmbH, T\"{U}V Austria, Frauscher Sensonic, and the NVIDIA Corporation.

\clearpage
\bibliography{collas2022_conference}
\bibliographystyle{collas2022_conference}

\appendix
\section{Appendix}

\subsection{Sinusoid Regression}
\label{app:sinus}

Table~\ref{tab:sinusoid-mse-onestep} lists the average MSE on the sinusoid dataset for SubGD and the reference methods after one update step.

\begin{table}[h]
\caption{MSE values on the sinusoid dataset for 5-, 10-, and 20-shot regression after one update step. SubGD and MAML start their fine-tuning procedure from the same initialization (50000 iterations of MAML training).}
\label{tab:sinusoid-mse-onestep}
\begin{center}
\begin{tabularx}{0.65\linewidth}{Xrrrrrr}
\toprule
\multirow{2}{*}{Method} & \multicolumn{2}{c}{5-shot} & \multicolumn{2}{c}{10-shot} & \multicolumn{2}{c}{20-shot} \\
 & mean & median & mean & median & mean & median \\
\midrule
MAML & 0.550 & 0.341 & 0.382 & 0.250 & 0.214 & 0.120 \\
Reptile & 0.989 & 0.777 & 0.548 & 0.378 & 0.342 & 0.238 \\
MetaSGD & 0.410 & 0.230 & 0.268 & 0.123 & 0.143 & 0.086 \\
MT-net & 0.554 & 0.330 & 0.371 & 0.264 & 0.166 & 0.094 \\
Meta-Curvature & 1.998 & 1.203 & 2.487 & 1.589 & 2.732 & 1.816 \\
SubGD & 0.807 & 0.358 & 0.780 & 0.567 & 0.625 & 0.474 \\
\bottomrule
\end{tabularx}
\end{center}
\end{table}

For MAML, MetaSGD, Meta-Curvature, and MT-net, we used the hyperparameters as reported in the respective papers.
For Reptile, we optimized hyperparameters ourselves.
Table~\ref{tab:sinus-hp} lists the final hyperparameters for each method.

\begin{table}[t]
\caption{Hyperparameters for MAML, Reptile, MetaSGD, MT-net, and Meta-Curvature on the sinusoid regression dataset. Support and query sizes 5/10/20 indicate that we separately trained each method for each support size.}
\label{tab:sinus-hp}
\begin{center}
\begin{tabularx}{\linewidth}{lXXXXXp{1.5cm}XXX}
\toprule
Method & Meta-batch size & Iterations & Optimizer & Inner-loop steps & Outer learning rate & Inner learning rate & Support size (train) & Query size (train) & Query size (val/test) \\
\midrule
MAML & 25 & 50000 & Adam & 1 & 0.001 & 0.01 & 5/10/20 & 10 & 100 \\
Reptile & 25 & 50000 & SGD & 10 & 1.0 & 0.005 (20-shot: 0.01) & 5/10/20 & 5/10/20 & 100 \\
MetaSGD & 25 & 50000 & Adam & 1 & 0.001 & 0.01 & 5/10/20 & 10 & 100 \\
MT-net & 4 & 60000 & Adam & 1 & 0.001 & 0.01 & 5/10/20 & 5/10/20 & 10 \\
Meta-Curvature & 25 & 70000 & Adam & 1 & 0.001 & 0.001 & 5/10/20 & 5/10/20 & 10 \\
\bottomrule
\end{tabularx}
\end{center}
\end{table}

\subsection{Non-linear RLC}
\label{sec:appendix:rlc}
In this section, we provide more details on the non-linear RLC system, dataset, and training procedure.

\subsubsection{Details on the RLC System and the Dataset}
\label{sec:appendix:rlc_system_dataset}
The RLC system can be described by the non-linear second-order differential equation
\begin{equation}
    \label{eq:rlc}
    \begin{pmatrix}
    \Dot{v}_C(t) \\ \Dot{i}_L(t)
    \end{pmatrix}
    = 
    \begin{pmatrix}
    0 & \frac{1}{C} \\
    - \frac{1}{L(i_L)} & - \frac{R}{L(i_L)}
    \end{pmatrix}
    \begin{pmatrix}
    v_C(t) \\ i_L(t)
    \end{pmatrix}
    + 
    \begin{pmatrix}
    0 \\ \frac{1}{L(i_L)}
    \end{pmatrix}
    v_{in}(t) , 
\end{equation}
where the inductance $L$ is modelled with a non-linear dependency on the inductor current $i_L$ and parameter $L_0$:
\begin{equation*}
    L = L(i_L) = L_0 \left[ 0.9 \left(\frac{1}{\pi} \mathrm{arctan} (-5 |i_L| - 5) + 0.5\right)+0.1\right].
\end{equation*}
Such a dependency typically arises in ferrite inductors that operate in partial saturation \citep{capua_nl_rlc_2017}.
In this system, the state $x(t)=\left[v_c(t), \; i_L(t)\right]^\top \in \mathbb{R}^2$ contains the voltage across the capacitor $v_C$ and the current through the inductor $i_L$. 
The output of this system $y(t) = v_C(t) \in \mathbb{R}$ is the capacitor voltage and the input $u(t) = v_{in}(t) \in \mathbb{R}$ is the input voltage to the electrical circuit denoted as $v_{in}(t)$.

We use the dynamical system defined by Equation~\eqref{eq:rlc} to generate our training and test tasks, where a task is defined as a specific instantiation of the system parameters resistance $R$ (unit $\Omega$), inductance $L_0$ (unit $\mathrm{\mu H}$), and capacitance $C$ (unit $\mathrm{nF}$). 
We generate 512 training tasks and 256 test tasks by sampling $R$, $L_0$ and $C$ uniformly from distributions with the ranges $[1, 14)$, $[20, 140)$, and $[100, 800)$, respectively. 
Different tasks correspond to different dynamical systems, which we want to approximate with a neural network after observing few samples.
For each task, we generate our ground truth data by simulating multiple state trajectories of each system.
The initial condition is always $x_0 = \left[0, \; 0\right]^\top$, but the input signal $v_{in}(t)$ is random filtered white noise with bandwidth $80\, \mathrm{kHz}$ and standard deviation $80 \mathrm{V}$. 
The resulting dataset for each task consists of three sequences that we discretize into 2000 time steps with a step size of $T_s = 1 \mathrm{\mu s}$. 
The ground truth simulation is performed with an explicit Runge--Kutta method of order 5(4). 
Following \citet{forgione_si_2022}, we also add Gaussian noise with a standard deviation of 0.1 to the outputs $y$ after simulation to represent measurement noise. An example of how the tasks without the measurement noise look like is depicted in Figure~\ref{fig:rlc-tasks-inputoutput}.

\begin{figure}
    \centering
    \includegraphics[width=0.8\linewidth]{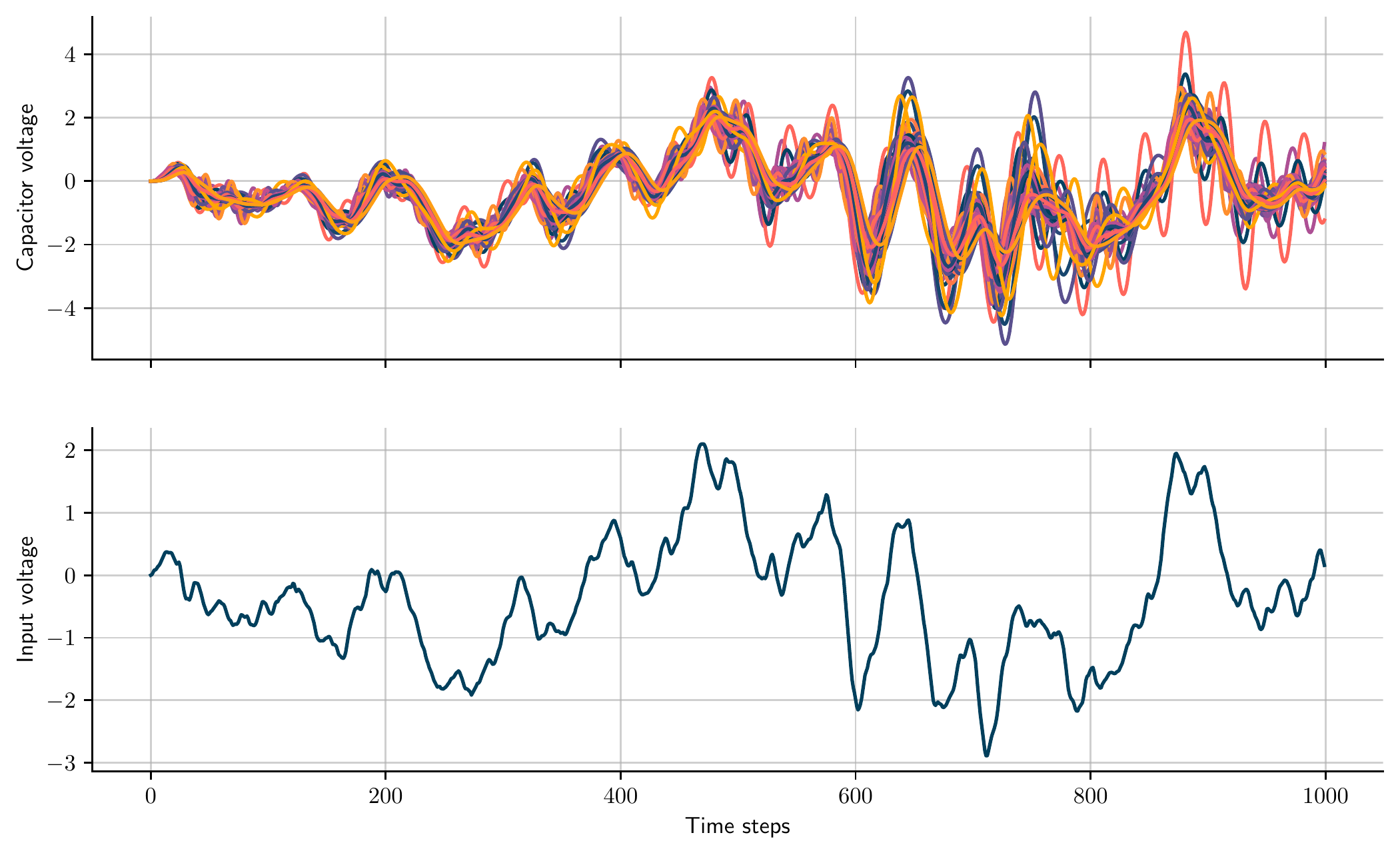}
    \caption{Output voltage of 50 different test systems simulated with the same input signal for visualization purposes. Shown for 1000 time steps.}
    \label{fig:rlc-tasks-inputoutput}
\end{figure}

\subsubsection{Details on the Training Objective}

To train a model on the data from the RLC system, we adopt a technique called ``truncated simulation error minimization'' introduced by \citet{forgione_si_2021}. In the following, we first review the main aspects of this method and then outline the changes we made.

The goal is to learn a neural network $f_\theta$ such that it approximates a dynamical system (e.g., given by Equation~\eqref{eq:rlc}), which yields the ordinary differential equation
\begin{equation}
\label{eq:neural_ss}
\begin{split}
    \Dot{\hat{x}} =& \, f_\theta(\hat{x}, u) \\
\end{split}
\end{equation}
with initial condition $x_0$, where $\hat{x}$ are the predicted system states, $\hat{y} = g(\hat{x})$ are the predicted outputs and $u$ are the inputs to the system. 
The output function $g$ can be any function, potentially with learnable parameters, but in our case it is just a linear mapping without any additional parameters.
To solve this differential equation which is parameterized by the neural network we can use a numerical solution scheme of choice: 
\begin{equation}
    \hat{x}(t; \theta, x_0) = \mathrm{ODEINT}\left(t, f_\theta(\cdot, \cdot), u(\cdot), x_0\right).
\end{equation}
This yields a state trajectory from which the predicted output $\hat{y}$ can be computed for every time step using the output function $g$. 

\citet{forgione_si_2021} argue that naively minimizing the mean squared error between predicted outputs $\hat{y}$ and ground truth $y$ on full dataset trajectories might become computationally too expensive because of the large computation graphs. 
Instead, they sample ``truncated'' subsequences of fixed length from the dataset, which can be processed in batches. 

Additionally, they introduce a set of free, learnable variables $\Bar{X} = \{\Bar{x}_0, \Bar{x}_1, \dots, \Bar{x}_N\}$ with the same dimension as the state space, where $N$ corresponds to the number of data points in the dataset. 
These variables are meant to capture the subsequences' initial conditions. 
As such, they should become consistent with the learned function $f_{\theta}$.
Consequently, they take advantage of the additional information by training the free variables jointly by minimizing the dual loss 
\begin{equation}
    \cL_{\mathrm{tot}}(\theta, \Bar{X}) = \cL_{\mathrm{fit}}(\theta, \Bar{X}) + \alpha \cL_{\mathrm{reg}}(\theta, \Bar{X}), 
\end{equation}
where $\alpha \geq 0$ is a regularization weight, $\cL_{\mathrm{fit}}$ is the MSE objective on the outputs, and $\cL_{\mathrm{reg}}$ is the consistency loss, which penalizes the distance between the predicted state $\hat{x}_t$ and the free variable $\Bar{x}_t$ for each time step. 
For further details, we refer to \citet{forgione_si_2021}.

\citet{forgione_si_2021} initialized the variables $\bar{X}$ with noisy measurements of all state variables. 
We cannot adopt this convention since we assume that not all state variables are observable (and thus not contained in the dataset). 
\citet{forgione_si_2022} initialize the hidden states in $\Bar{X}$ to zero. 
We found that with this initialization the consistency loss $\cL_{\mathrm{reg}}$ in combination with the additional parameters $\Bar{X}$ does not improve the final performance on output prediction 
and that the system is not able to capture the hidden dynamics. 
For this reason, we only use the fit loss term $\cL_{\mathrm{fit}}$ 

\subsubsection{Details on the model and the training procedures}
As described in Section~\ref{introduction}, SubGD has three training stages:\ (1) pre-training, (2) fine-tuning, and (3) evaluation.
In the following, we provide details on the neural network models and each of these phases for the RLC dataset.

The pre-training stage performed on the training tasks yields a model which we adapt to the test tasks with SubGD using only few data samples.
SubGD does not require any specific pre-training procedure, which enables the combination with other few-shot learning methods. 
In this experiment, we apply SubGD update steps on models pre-trained with supervised training, MAML, or Reptile.
During fine-tuning, the subspace for SubGD is determined on update directions obtained by fine-tuning the pre-trained model to each training task separately. 
Finally, we use SubGD to adapt our model to the test tasks and evaluate its performance under the constraint of limited data.

For all training procedures, we parameterize $f_\theta$ as a feed-forward neural network with one hidden layer, where the number of inputs corresponds to the state and input dimension of the system.
The hidden layer has 50 units and is followed by a tanh nonlinearity, and the two output units correspond to the state variables.
As numerical solution scheme for the resulting neural ODE, we use forward Euler with a constant step size that is equal to the dataset's discretization interval $T_s$. 
Since the output $\hat{y}$ is contained in the state, the function $g(\hat{x}) = \left[1, \, 0 \right]\hat{x}$ is a simple linear mapping with no additional parameters. This setup is adopted from \citet{forgione_si_2022}.

\paragraph{Supervised Pre-training} 
For the supervised pre-training, we generate another (nominal) task in the same way as the training and test tasks (see Section~\ref{sec:appendix:rlc_system_dataset}) with parameters $R = 3 \, \Omega$, $C = 270 \, \mathrm{nF}$, $L_0 = 50 \, \mathrm{\mu H}$ and train our model with learning rate $0.0001$ for 30 epochs with mini-batches containing 16 (truncated) sequences of length 256 that are sampled from random positions in the dataset. 
As initial conditions for the Euler-discretized neural state space model (see Equation~\eqref{eq:neural_ss}), we use the observable state variable (i.e., $v_C$) one time step before each batch sequence starts and set the unobservable states (i.e., $i_L$) to zero. We use the Adam optimizer for this training phase \citep{kingma_2015_adam}.

\paragraph{foMAML and Reptile Pre-training}
We use the 512 training tasks (see Section~\ref{sec:appendix:rlc_system_dataset}) to train our model with foMAML and Reptile. 
This requires to sample support and query sets for each task, where we define these sets to be non-overlapping sequences of equal length (i.e., a support or query size of 50 time steps) starting at random positions in the dataset.
The corresponding initial values for each sequence are determined in the same way as in supervised pre-training. 
All additional hyperparameters are given in Table~\ref{tab:rlc-hp}.

\paragraph{Fine-tuning on Training Tasks}
In order to calculate the preconditioning matrix that determines the subspace of SubGD, we need to collect update directions on the training tasks.
This is done by fine-tuning the pre-trained model for each training task separately in the same way as in supervised pre-training, but with a smaller learning rate ($5e^{-6}$) and with early stopping ($5$ epochs) to determine the number of fine-tuning epochs.

\paragraph{Evaluation}
The final evaluation of SubGD is performed on test tasks (see Section~\ref{sec:appendix:rlc_system_dataset}) with varying support sizes (see Figure~\ref{fig:rlc-support-vs-mse}), where the support size corresponds to the length of the sequence used for few-shot model adaptation to the task at hand.
For the RLC dataset, this sequence is taken from the beginning of a state trajectory in the dataset.  
Hence, model adaptation is always performed with the true initial conditions used for data generation. Note that there is no batching as in the previous training stages. This means we only use a single sequence (with length equal to the respective support size) for model adaptation.
Finally, the metric (i.e., the MSE) is calculated on the full 2000 time steps of another trajectory.

\begin{table}[t]
\caption{Hyperparameters for MAML, Reptile, MetaSGD, and Meta-Curvature on the RLC dataset.}
\label{tab:rlc-hp}
\begin{center}
\begin{tabularx}{\linewidth}{rXXXXX>{\raggedright}p{2.2cm}XX}
\toprule
Method & Meta-batch size & Iterations & Optimizer & Inner-loop steps & Outer learning rate & Inner learning rate & Support size (train) & Query size (train) \\
\midrule
foMAML & 16 & 50000 & Adam & 5 & 0.001 & 0.001 & 50 & 50 \\
Reptile & 10 & 50000 & SGD & 5 & 0.001 & 0.001 & 50 & 50 \\
Meta-Curvature & 6 & 50000 & Adam & 5 & 0.001 & 0.0001 & 50 & 50 \\
MetaSGD & 6 & 50000 & Adam & 1 & 0.001 & randomly from [1e-4, 5e-4] & 50 & 50 \\
\bottomrule
\end{tabularx}
\end{center}
\end{table}

\subsection{Climate Change Adaption}
\label{sec:appendix:hydrology}

We refer to \citet{Gauch2020AML} for a general introduction of the rainfall--runoff prediction setting that is geared toward machine learning practitioners.

\subsubsection{Setup Details}

Our experimental setup for the climate change adaptation experiment is as follows:\ we use meteorological inputs from the DayMet weather observation data product \citep{thornton2012daymet} and set the prediction targets with the \hbv-EDU model \citep{seibert2012teaching, aghakouchak2010application}. From these targets, we create our observations by adding Gaussian noise with a standard deviation of 0.2 in the normalized space.
We adopt the sequence-to-one prediction setting from \cite{kratzert_toward_2019} and congruently use an LSTM as the machine learning model, which constitutes the state-of-the-art approach for runoff simulation \citep{kratzert_toward_2019,kratzert_regional_2019}. We choose a hidden size of $20$ and an input sequence length of $365$ days \citep{kratzert2019neuralhydrology}.
The LSTM uses 4 meteorological inputs (precipitation, temperature, solar radiation, and vapor pressure). 

During training, we use output dropout in the hidden layer after the LSTM with a rate of 0.2.
We train with an MSE loss function and stop training after 1000 training iterations without validation improvement.
We tune hyperparameters at support size 60 with three different random seeds and choose the configuration with the best average NSE.
Table~\ref{tab:hydro-hp} lists the hyperparameters of the different baseline methods.
For supervised pre-training, we use a mini-batch size of 64, a learning rate of 0.005, and apply early stopping after the validation NSE did not improve for 6 epochs.

We compare SubGD fine-tuning against the baselines of first-order MAML (foMAML), MetaSGD, Meta-Curvature, and Reptile.
For the initial training phase, we use 400 basins with 9 years of training and 8 years of validation data (1989-10-01 -- 1999-09-30 and 1980-10-01 -- 1989-09-30, respectively).
For SubGD, we fine-tune on the same time period of each of these 400 basins individually to collect training trajectories from which we derive the SubGD subspace.
This ensures that SubGD does not have access to more data than the baseline methods.
We then randomly choose 50 of these basins to search for a suitable learning rate and number of update steps of SGD and SubGD fine-tuning on few samples. 
As in the sinusoid and RLC experiments, we apply SubGD to the initializations learned by the different pre-training methods (supervised, foMAML, and Reptile).
Finally, following the paradigm of trading space for time, we evaluate all methods on data from 225 previously unseen basins (i.e., locations that were not part of the 400 used for training) on the period 1990-10-01 -- 1999-09-30. 
We randomly draw support samples from the first year and use the subsequent years as the query set. 
 
\begin{figure}
    \centering
    \includegraphics[width=\linewidth]{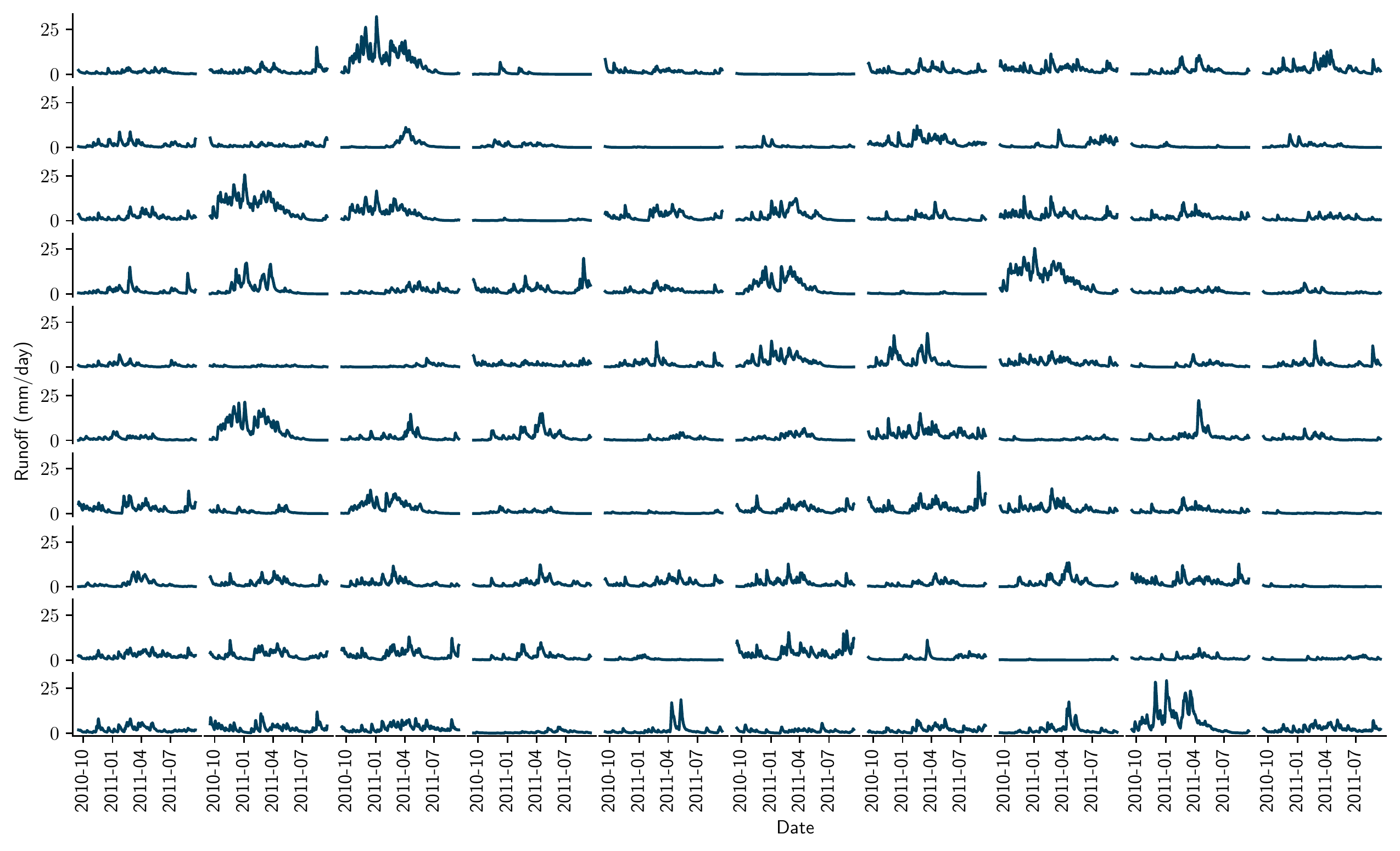}
    \caption{Examples for runoff target time series of 100 basins for the hydrological year 2010-10-01 to 2011-09-30.}
    \label{fig:hydro-qsim}
\end{figure}

\begin{table}[t]
\caption{Hyperparameters for MAML, Reptile, MetaSGD, and Meta-Curvature on the climate change adaptation dataset.}
\label{tab:hydro-hp}
\begin{center}
\begin{tabularx}{\linewidth}{rXXXXX>{\raggedright}p{2.2cm}XX}
\toprule
Method & Meta-batch size & Iterations & Optimizer & Inner-loop steps & Outer learning rate & Inner learning rate & Support size (train) & Query size (train) \\
\midrule
foMAML & 6 & 50000 & Adam & 5 & 0.01 & 0.001 & 60 & 60 \\
Reptile & 6 & 50000 & SGD & 10 & 1.0 & 0.1 & 60 & 60 \\
Meta-Curvature & 6 & 50000 & Adam & 1 & 0.01 & 1e-5 & 60 & 60 \\
MetaSGD & 6 & 50000 & Adam & 1 & 0.01 & randomly from [5e-5, 1e-4] & 60 & 60 \\
\bottomrule
\end{tabularx}
\end{center}
\end{table}

\subsubsection{Details on the \texorpdfstring{\hbv-EDU}{HBV-EDU} Model}
Models are a central part of hydrological inquiry and practice. They are used extensively within flood forecasting systems and water resources management, but are also central to assess the impact of potential future changes in climate and land use.

The \hbv~model is the prototypical conceptual rainfall--runoff model. It was originally conceptualized by the Swedish Meteorological and Hydrological Institute to predict the inflow of hydropower plants \citep{bergstrom1992hbv, lindstrom1997development}. Today, variations of the model with varying degrees of complexity are widely used in research and practice. An extensive retrospective of the development can be found in \citet{seibert2021retrospective}. 

\hbv-EDU is one of these variations.
Originally devised for presentation and educational purposes, its ease of use and clarity led to its wide adoption as a reference model.
The model accounts for five different processes to represent the rainfall--runoff relationship: (1) effective precipitation, (2) snow melt and accumulation, (3) evapotranspiration, (4) soil moisture, and (5) runoff generation. To model this chain, it requires meteorological inputs in the form of precipitation and temperature at each time step, the day of the year to adjust for the angle of incoming radiation, and monthly means of potential evapotranspiration and temperature to provide seasonal adjustments for the effective evapotranspiration. All in all, \hbv-EDU has 11 parameters that can adjust the internal process representation. 

\subsection{MiniImageNet Experiments}
\label{app:mini}
For our experiments on miniImageNet, we use the four-layer CNN architecture as proposed in \citet{vinyals_matching_2016}.
We use the 64 classes of the training split as a  64-way classification pre-training task.
For this pre-training, we optimize for 200 epochs using the Adam optimizer \citep{kingma_2015_adam} with a learning rate of $10^{-3}$, an early stopping patience of 10, and a batch size of 256.
Starting from the resulting model, we individually fine-tune on 1024 5-way 5-shot tasks that are randomly drawn from the same training classes. Here, one model is trained for each task separately. 
The difference of the resulting fine-tuned parameters and the pre-training parameters yields the 1024 vectors that we use to calculate the effective rank shown in Figure~\ref{fig:miniimmage-erank}.

Next, we determine the number of update steps and the learning rate for fine-tuning on new tasks in a grid search on 20 5-way 5-shot tasks from the validation split. 
The aforementioned 1024 vectors are used to estimate the matrix $C$ that is used in SubGD (see Equation~\eqref{eqn:subgd_update}).
Finally, we compare the performance of fine-tuning with Adam and SubGD on a randomly selected set of 100 5-way 5-shot tasks from the test split.
Using Adam for fine-tuning results in an accuracy of 0.558, while SubGD achieves an accuracy of 0.539.
In a two-sided Wilcoxon signed-rank test, the performance difference is assigned a $p$-value of $6.65^{-05}$. Thus, we find SubGD to be unsuited for miniImageNet.
This finding is in line with the absence of significant flattening of the effective rank (see Figure~\ref{fig:miniimmage-erank}), which indicates that the models corresponding to miniImageNet tasks do not span a low-dimensional parameter subspace.

\section{Details on the Generalization Bound}
\label{sec:generalization_bound_proof}

In this Section, we present further details for the generalization bound~\eqref{eq:complexity_of_est_error} that motivates our few-shot learning approach.
In particular, we follow the notation of~\citet{long_generalization_2020} (Subsection~\ref{subsec:notation}) to introduce a new class of models (Subsection~\ref{subsec:class_of_models}) which are derivable by our algorithm and we provide a new generalization bound for these models (Subsection~\ref{subsec:gen_bound}). Finally (Subsection~\ref{subsec:extensions}) we briefly discuss how possible extensions (e.g., to more general network architectures) may look like.

\subsection{Notation}
\label{subsec:notation}
Let $\mathcal{X}$ be a real-valued compact \textit{input} space, $\mathcal{Y}\subset\mathbb{R}$ be a \textit{target} space and $P$ be a distribution (Borel probability measure) on $\mathcal{X}\times\mathcal{Y}$.
Given some \textit{sample} (multiset) $S=\{(x_1,y_1),\ldots,(x_m,y_m)\}$ independently drawn from $P$, the goal is to find a function $f:\mathcal{X}\to\mathcal{Y}$ such that the \textit{risk (generalization error)} $\mathbb{E}_{z\sim P}[\ell_f(z)]$ is small.
Here, $\ell:\mathcal{Y}\times\mathcal{Y}\to[0,1]$
denotes some \textit{loss} function and $\ell_f(z)=\ell(f(x),y)$ for $z=(x,y)$.
In this work, we focus on loss functions which are $l$-Lipschitz in the first argument for some $l\geq 1$.

For simplicity and to not overload notation, we follow~\citet{long_generalization_2020} and consider the class $F$ of convolutional neural networks with the following properties (a discussion on possible extensions can be found in Subsection~\ref{subsec:extensions}):
\begin{itemize}
    \item All layers use zero padding.
    \item The activation functions are $1$-Lipschitz and nonexpansive, e.g., ReLU and tanh.
    \item The kernels of the convolution layers are $K^{(i)}\in\mathbb{R}^{b\times b\times c\times c}$ for all $i\in\{1,\ldots,L\}$.
    \item The weight vector $w$ of the last (linear) layer is fixed such that $\norm{w}_2=1$.
\end{itemize}
We denote by $n=L b^2 c^2$ the total number of trainable parameters.
The parameter vector $\theta\in \mathbb{R}^n$ of some network $f_\theta\in F$ is the composition of all elements in the kernels $K^{(1)},\ldots,K^{(L)}$ of $f$.
We denote by $\mathrm{op}(\theta^{(i)})$ the operator matrix of the kernel $K^{(i)}$ (the matrix representing the linear kernel function),
by $\norm{x}_1$ the $\ell_1$-norm of a vector $x$, 
and by $\norm{M}_2$ the operator norm of a matrix $M$ w.r.t.~the Euclidean vector norm, i.e., the spectral norm.
If the loss function $\ell$ and the neural network architecture $f$ is clear from the context, we denote by
$\mathcal{L}(\theta):=\mathbb{E}_{z\sim P}[\ell_{f_\theta}(z)]$ the risk
and by $\mathcal{L}_S(\theta):=\frac{1}{|S|}\sum_{z\in S} \ell_{f_\theta}(z)$ the empirical risk.
We follow~\citet{long_generalization_2020} and define the norm $\norm{\theta}_\sigma:=\sum_{i=1}^L \|\mathrm{op}(\theta^{(i)})\|_2$.

By~\citet{sedghi2018singular,long_generalization_2020} it holds that $\norm{\theta}_\sigma\leq\norm{\theta}_1$, and thus $\norm{\cdot}_{\sigma}$ can always be replaced by $\norm{\cdot}_1$ in the subsequent calculations, without any further effort. Our main Theorem~\ref{thm:generalization_bound} then immediately yields the bound~\eqref{eq:complexity_of_est_error}, which we formulated in terms of the $1$-norm, in order to not introduce too much new notation in the main text. 

\subsection{Class of models coming from SubGD}
\label{subsec:class_of_models}

In the following, we define the class of models which can be obtained by SubGD as follows:
\begin{definitionA}
\begin{align*}
F_\mathrm{SubGD}=\left\{f_\theta\in F\mid \theta=\theta_0+\hat C\cdot\theta',\theta'\in\mathbb{R}^n\right\},
\end{align*}
where $\theta_0\in\mathbb{R}^{n}$ is the initialization of SGD.
\end{definitionA}

\subsection{Generalization bound for SubGD}
\label{subsec:gen_bound}

\begin{theoremA}[Generalization Bound]
\label{thm:generalization_bound}
Let $f_{\theta_0}\in F$ be a convolutional neural network with kernels normalized such that $\|\mathrm{op}({\theta_0^{(i)}})\|_2=1$. 
Then, there is an absolute constant $\zeta>0$ such that the following holds with probability at least $1-\delta$, for all $\delta>0$ over the choice of a sample $S$ of size $m$, for all models $f_\theta\in F_\mathrm{SubGD}$ with $\norm{\theta-\theta_0}_\sigma\leq \beta$:

If $\beta\geq 2$ then
\begin{align}
    \mathcal{L}(\theta)\leq \mathcal{L}_S(\theta)
    +\zeta\sqrt{\frac{\mathrm{rank}(\hat C)\left(\beta+\log(l)\right)+\log(1/\delta)}{m}}
\end{align}
and otherwise
\begin{align}
    \mathcal{L}(\theta)\leq \mathcal{L}_S(\theta)
    +\zeta\left(\beta l\sqrt{\frac{\mathrm{rank}(\hat C)}{m}}+\sqrt{\frac{\log(1/\delta)}{m}}\right).
\end{align}
\end{theoremA}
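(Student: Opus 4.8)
The plan is to adapt the covering-number based generalization machinery of \citet{long_generalization_2020} to the restricted hypothesis class $F_{\mathrm{SubGD}}$, exploiting the fact that the admissible parameter perturbations are confined to a low-dimensional subspace. By the definition of $F_{\mathrm{SubGD}}$, every $f_\theta\in F_{\mathrm{SubGD}}$ satisfies $\theta-\theta_0=\hat{C}\theta'$, so $\theta-\theta_0$ lies in the range of $\hat{C}$, an $r$-dimensional subspace with $r=\mathrm{rank}(\hat{C})$. Writing $\hat{C}=V\Sigma V^\top$ as in the main text, the models are effectively parameterized by the $r$ coordinates $V^\top(\theta-\theta_0)\in\mathbb{R}^r$. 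The goal is therefore to reproduce the Long--Sedghi bound with the ambient parameter count $n$ replaced throughout by the effective dimension $r$.

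First I would control the complexity of the loss class. Using the structural assumptions on $F$ --- zero padding, $1$-Lipschitz nonexpansive activations, a fixed last layer with $\norm{w}_2=1$, and kernels normalized so that $\norm{\mathrm{op}(\theta_0^{(i)})}_2=1$ --- the Long--Sedghi Lipschitz estimates bound how much $f_\theta(x)$ can move as $\theta$ varies within the ball $\{\theta:\norm{\theta-\theta_0}_\sigma\leq\beta\}$; the product of layerwise spectral norms accumulates into the additive factor $\beta$ (this is where $\beta$, rather than $\log\beta$, enters, since $\log$ of an exponential-in-$\beta$ constant is linear). Composing with the $l$-Lipschitz, $[0,1]$-valued loss $\ell$ yields a Lipschitz bound for $\theta\mapsto\ell_{f_\theta}(z)$ with $l$ as a multiplicative factor. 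Because the perturbation is restricted to the $r$-dimensional range of $\hat{C}$, a standard volumetric argument gives a covering at scale $\epsilon$ whose logarithm scales like $r\log(l\beta/\epsilon)$ rather than $n\log(\cdots)$. The inequality $\norm{\theta}_\sigma\leq\norm{\theta}_1$ noted in the notation then lets me phrase everything in the $\sigma$-norm and later translate to the $1$-norm for the main-text bound.

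Next I would feed this covering-number estimate into the usual uniform-convergence pipeline: symmetrization to the empirical Rademacher complexity, a Dudley entropy-integral bound in terms of the log-covering numbers, and a bounded-differences (McDiarmid) concentration step to obtain the high-probability statement with the additive $\sqrt{\log(1/\delta)/m}$ term. The two regimes arise from how the discretization scale $\epsilon$ is chosen when balancing the entropy term against the Lipschitz discretization error. For $\beta\geq2$ the entropy term dominates and collapses the multiplicative $l$ into the additive $\log l$, giving $\zeta\sqrt{r(\beta+\log l)/m}$; for $\beta<2$ the class is small enough that a direct Lipschitz bound on a single covering scale is sharper, producing the term $\beta l\sqrt{r/m}$ that is linear in both $\beta$ and $l$.

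The main obstacle will be obtaining a covering number that genuinely scales with $r=\mathrm{rank}(\hat{C})$ instead of $n$, while keeping the leading constant $\zeta$ absolute --- independent of the architecture, the number of parameters, and the input dimension. This requires carrying the Long--Sedghi layerwise spectral estimates through the subspace restriction without reintroducing a dependence on $n$, and tracking how the layerwise spectral norms compound so that the $\beta$-dependence remains additive inside the square root. The case split at $\beta=2$ must be handled so that both branches share a single constant $\zeta$; this is the delicate accounting step where the optimization over $\epsilon$ has to be carried out separately in each regime.
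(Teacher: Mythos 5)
Your proposal is correct and follows essentially the same route as the paper: both arguments reduce the problem to a Lipschitz parametrization of the loss class over the rank-$r$ range of $\hat C$, with the Long--Sedghi layerwise spectral estimate supplying the Lipschitz constant $\beta l e^{\beta}$ (whose logarithm produces the additive $\beta+\log l$) and $\mathrm{rank}(\hat C)$ replacing the ambient parameter count $n$. The only difference is presentational: the paper packages the covering-number/Dudley/concentration pipeline you describe into a single black-box invocation of Lemma~2.3 of \citet{long_generalization_2020}, after verifying the $(\beta l e^{\beta},\mathrm{rank}(\hat C))$-Lipschitz-parametrized property via the norm $\norm{\omega}_{\mathrm{SubGD}}=\norm{V\omega}_\sigma$ and the map $\phi_{\mathrm{SubGD}}$.
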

Theorem~\ref{thm:generalization_bound} studies the distance of a SubGD-based model to a model $f_{\theta_0}\in F$ with normalized weights.
Note that the normalization assumption is for simplicity only and can be extended to more general parameters in a straight forward way, see e.g.\ Theorem~3.1 in~\citet{long_generalization_2020}.

\subsection{Proofs}
\label{sec:proof_of_bound}

Theorem~\ref{thm:generalization_bound} can be proven by showing that the class
\begin{align*}
\ell_{F_\mathrm{SubGD}^\beta}:=\left\{\ell_{f_\theta}\mid f_{\theta}\in F_\mathrm{SubGD},\left\lVert \theta-\theta_0\right\rVert_\sigma\leq\beta\right\}
\end{align*}
of loss functions applied to SubGD-based models $F_{\mathrm{SubGD}}$, is $\left(\beta l e^\beta,\mathrm{rank}(\hat C)\right)$-Lipschitz parametrized.
\begin{definitionA}[Lipschitz parametrized]
\label{def:lipschitz}
A class $G$ of functions on a common real-valued domain $Z$ is $(B,d)$-Lipschitz parametrized for some $B,d>0$, if there is a norm $\norm{.}$ on $\mathbb{R}^d$ and a function $\phi$ from the unit ball in $\mathbb{R}^d$ (w.r.t.~the norm) mapping to $G$ such that
\begin{align*}
    \left|(\phi(\theta))(z)-(\phi(\theta'))(z)\right|\leq B \left\lVert\theta-\theta'\right\rVert
\end{align*}
for all $z\in Z$ and all $\theta,\theta'$ with $\norm{\theta}\leq 1$ and $\norm{\theta'}\leq 1$.
\end{definitionA}
Theorem~\ref{thm:generalization_bound} follows directly from classical arguments in statistical learning theory, see e.g., Lemma 2.3 in~\citet{long_generalization_2020}, if we show Definition~\ref{def:lipschitz} for our class of loss functions applied to SubGD-based models.
That is, we need to prove the following lemma.

\begin{lemma}
\label{lemma:lipschitz}
The function class $\ell_{F_\mathrm{SubGD}^\beta}$
is $(\beta l e^\beta,\mathrm{rank}(\hat C))$-Lipschitz parametrized.
\end{lemma}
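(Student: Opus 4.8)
The plan is to exploit the low-rank structure of $\hat C$ to reduce the parametrizing dimension from the full parameter count $n$ down to $r:=\rank(\hat C)$, while re-using the parameter-Lipschitz estimate for convolutional networks established in~\citet{long_generalization_2020}. Write the truncated eigendecomposition $\hat C=V\Sigma V^\top$, with $V\in\mathbb{R}^{n\times r}$ having orthonormal columns and $\Sigma\in\mathbb{R}^{r\times r}$ diagonal and invertible (its diagonal entries are the $r$ nonzero retained eigenvalues). The first observation is that every $f_\theta\in F_\mathrm{SubGD}$ has $\theta-\theta_0=\hat C\theta'=V(\Sigma V^\top\theta')$, so $\theta-\theta_0$ lies in the $r$-dimensional column space of $V$. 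Conversely, since $\Sigma V^\top$ maps $\mathbb{R}^n$ onto $\mathbb{R}^r$, the reparametrization $u:=\Sigma V^\top\theta'$ shows $F_\mathrm{SubGD}=\{f_{\theta_0+Vu}\mid u\in\mathbb{R}^r\}$, and imposing $\norm{\theta-\theta_0}_\sigma\leq\beta$ cuts this down to $\{f_{\theta_0+Vu}\mid\norm{Vu}_\sigma\leq\beta\}$.

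Next I would turn this into a parametrization by the unit ball of an $r$-dimensional normed space. Define $\norm{u}_*:=\tfrac1\beta\norm{Vu}_\sigma$ on $\mathbb{R}^r$ (a genuine norm because $V$ is injective and $\norm{\cdot}_\sigma$ is a norm), and set $\phi(u):=\ell_{f_{\theta_0+Vu}}$. By the previous paragraph $\phi$ maps the unit ball $\{u:\norm{u}_*\leq1\}=\{u:\norm{Vu}_\sigma\leq\beta\}$ onto exactly $\ell_{F_\mathrm{SubGD}^\beta}$, so it is an admissible parametrization with $d=r=\rank(\hat C)$.

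It then remains to bound the Lipschitz constant of $\phi$. For $u,u'$ in the unit ball put $\theta=\theta_0+Vu$ and $\theta'=\theta_0+Vu'$; both satisfy $\norm{\theta-\theta_0}_\sigma\leq\beta$ and $\norm{\theta'-\theta_0}_\sigma\leq\beta$. Here I would invoke the core estimate of~\citet{long_generalization_2020}: for the normalized architecture $F$ (with $\norm{\mathrm{op}(\theta_0^{(i)})}_2=1$, $1$-Lipschitz nonexpansive activations, zero padding, and fixed unit-norm last layer), the network output is parameter-Lipschitz with an exponential factor, $\norm{f_\theta(x)-f_{\theta'}(x)}_2\leq e^\beta\,\norm{\theta-\theta'}_\sigma$; combined with the $l$-Lipschitzness of $\ell$ in its first argument this gives $\abs{\ell_{f_\theta}(z)-\ell_{f_{\theta'}}(z)}\leq l\,e^\beta\,\norm{\theta-\theta'}_\sigma$ for all $z$. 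Since $\theta-\theta'=V(u-u')$ and $\norm{V(u-u')}_\sigma=\beta\norm{u-u'}_*$, chaining yields
\begin{align*}
\abs{\phi(u)(z)-\phi(u')(z)}\leq l\,e^\beta\,\norm{V(u-u')}_\sigma=\beta\,l\,e^\beta\,\norm{u-u'}_*,
\end{align*}
which is precisely the $(\beta l e^\beta,\rank(\hat C))$-Lipschitz parametrization claim.

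The main obstacle is the exponential parameter-Lipschitz estimate for the network, which is the technical heart of the argument; I expect to import it from~\citet{long_generalization_2020} rather than reprove it. Its proof telescopes layer by layer, using nonexpansiveness of the activations and zero padding to control each layer by its operator norm, and bounds the resulting product via $\prod_i\norm{\mathrm{op}(\theta^{(i)})}_2\leq\prod_i\bigl(1+\norm{\mathrm{op}((\theta-\theta_0)^{(i)})}_2\bigr)\leq e^{\norm{\theta-\theta_0}_\sigma}\leq e^\beta$. The remaining steps—rank reduction and the rescaling that contributes the factor $\beta$—are elementary linear algebra; the only points needing care are that $\Sigma$ is invertible (so the reparametrization is onto $\mathbb{R}^r$) and that $\norm{\cdot}_\sigma$ restricts to a norm on the range of $V$.
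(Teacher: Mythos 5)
Your proof is correct and follows essentially the same route as the paper's: parametrize $F_\mathrm{SubGD}$ by the column space of $\hat C$ via an injective $V\in\mathbb{R}^{n\times\mathrm{rank}(\hat C)}$, pull the $\norm{\cdot}_\sigma$ norm back through $V$, and invoke the exponential parameter-Lipschitz estimate of \citet{long_generalization_2020} (their Lemma~2.6) to get the factor $l e^\beta$. The only cosmetic difference is where the radius $\beta$ is absorbed --- you rescale the norm ($\norm{u}_*=\tfrac{1}{\beta}\norm{Vu}_\sigma$), whereas the paper keeps $\norm{\omega}_\mathrm{SubGD}=\norm{V\omega}_\sigma$ and instead composes with the affine map $\omega\mapsto\omega_0+\beta\omega$; both yield the same $(\beta l e^\beta,\mathrm{rank}(\hat C))$ conclusion.
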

The following proof partially relies on techniques developed in~\citet{long_generalization_2020}.
\begin{proof}[Proof of Lemma~\ref{lemma:lipschitz}]
Our goal is to show that there exists a norm $\left\lVert.\right\rVert$ on $\mathbb{R}^{\mathrm{rank}(\hat C)}$ and a mapping $\phi$ from the unit ball of $\mathbb{R}^{\mathrm{rank}(\hat C)}$ to $\ell_{F_\mathrm{SubGD}^\beta}$ such that
\begin{align*}
|\phi(\omega)(z)-\phi(\omega')(z)|\leq \beta l \, e^\beta \left\lVert \omega-\omega'\right\rVert
\end{align*}
holds for all $\omega,\omega'\in\mathbb{R}^{\mathrm{rank}(\hat C)}$ with $\left\lVert\omega\right\rVert\leq 1$ and $\left\lVert\omega'\right\rVert\leq 1$.

We start our proof by the observation that
\begin{align*}
    F_\mathrm{SubGD}=\left\{f_\theta\in F\mid \theta=\theta_0+V\,\omega,\,\omega\in\mathbb{R}^{\mathrm{rank}(\hat C)}\right\}
\end{align*}
with some $V\in\mathbb{R}^{n\times \mathrm{rank}(\hat C)}$ which contains as columns a basis of the vector space spanned by the columns of $\hat C$. The existence of such a $V$ follows from the existence and unique dimension of the Hamel basis. $V$ can also be chosen to be the matrix introduced in Section~\ref{methods}.

Let us define $\norm{\omega}_\mathrm{SubGD}:=\norm{V \omega}_\sigma$ for $\omega\in\mathbb{R}^{\mathrm{rank}(\hat C)}$ and note that $\norm{.}_\mathrm{SubGD}$ is a norm since $\norm{\omega}_\mathrm{SubGD}=0\implies \omega={\bf 0}$ as $\norm{.}_\sigma$ is a norm and $V$ has full column rank, $\norm{\alpha\, \omega}_\mathrm{SubGD}=|\alpha| \norm{\omega}_\mathrm{SubGD}$ for $\alpha\in\mathbb{R}$, and, $\norm{\omega+\omega'}_\mathrm{SubGD}\leq \norm{\omega}_\mathrm{SubGD}+\norm{\omega'}_\mathrm{SubGD}$ for all $\omega,\omega'\in\mathbb{R}^{\mathrm{rank}(\hat C)}$.

For some $\omega,\omega'\in\mathbb{R}^{\mathrm{rank}(\hat C)}$ such that $\norm{\omega-\omega_0}_\mathrm{SubGD}\leq \beta$ and $\norm{\omega'-\omega_0}_\mathrm{SubGD}\leq \beta$ with $\norm{\mathrm{op}((\theta_1+V\omega_0)^{(j)})}_2=1$, Lemma~2.6 in~\citet{long_generalization_2020} gives
\begin{align}
\nonumber
    \left|\ell(f_{\theta_1+V\omega}(x),y)-\ell(f_{\theta_1+V\omega'}(x),y)\right|&\leq l\, e^\beta \left\lVert V\omega-V\omega'\right\rVert_\sigma\\
    \label{eq:lipschitz_ineq}
    &= l\, e^\beta \left\lVert \omega-\omega'\right\rVert_\mathrm{SubGD}.
\end{align}

Consider now the mapping $\phi_\mathrm{SubGD}$ on the unit ball in $\mathbb{R}^{\mathrm{rank}(\hat C)}$ w.r.t.~the norm $\norm{.}_\mathrm{SubGD}$ defined by
\begin{align*}
    \phi_\mathrm{SubGD}(\omega)(x,y)=\ell(f_{\theta_1+V (\omega_0+\beta \omega)}(x),y).
\end{align*}
If $\widetilde\omega$ and $\widetilde\omega'$ are from the unit ball, then $\norm{\widetilde{\omega}}_\mathrm{SubGD}\leq 1$ and $\norm{\widetilde{\omega}'}_\mathrm{SubGD}\leq 1$.
Consequently, $\norm{(\omega_0+\beta \widetilde{\omega})-\omega_0}_\mathrm{SubGD}\leq \beta$ and $\norm{(\omega_0+\beta \widetilde{\omega}')-\omega_0}_\mathrm{SubGD}\leq \beta$ and the function maps to the class $\ell_{F_\mathrm{SubGD}^\beta}$.
By applying Equation~\eqref{eq:lipschitz_ineq} with $ \omega:=\omega_0+\beta \widetilde{\omega}$ and $ \omega':=\omega_0+\beta\widetilde{\omega}'$ we get
\begin{align*}
|\phi_\mathrm{SubGD}(\omega)(z)-\phi_\mathrm{SubGD}( \omega')(z)| &= |\ell(f_{\theta_1+V \omega}(x),y)-\ell(f_{\theta_1+V \omega'}(x),y)|\\
&= |\ell(f_{\theta_1+V (\omega_0+\beta \widetilde{\omega})}(x),y)-\ell(f_{\theta_1+V (\omega_0+\beta \widetilde{\omega}')}(x),y)|\\
&\leq \beta\, l\, e^\beta \norm{ \widetilde{\omega}-\widetilde{\omega}'}_\mathrm{SubGD}.
\end{align*}
\end{proof}

\begin{proof}[Proof of Theorem~\ref{thm:generalization_bound}]
Theorem~\ref{thm:generalization_bound} directly follows from Lemma 2.3 in~\citet{long_generalization_2020} by noting that $\beta > \log(\beta)$, $l\geq 1$ and $\beta\geq 2\implies \beta l e^\beta\geq 5$ for $\beta>0$.
\end{proof}

Let us end this subsection by providing an argument for an upper bound on $\norm{\theta_{k}-\theta_{0}}_1$ that should bring into play the effect of the matrix $\hat C$ on the iteration. Specifically, let us show that:
\begin{equation} \label{eq:norm_bound}
\norm{\theta-\theta_0}_1\leq \sqrt{n}\, k\,\eta\,l\!\sum_{i=1}^{n}\! \sigma_i.
\end{equation}
Thus the lower the dimension of the subspace our method focuses on, the more eigenvalues are zero.
\begin{proof}[Proof of norm bound \eqref{eq:norm_bound}]
We have
\begin{align*}
    \norm{\theta_{k}-\theta_{0}}_1 &\leq \sum_{t=1}^k \norm{\theta_{t}-\theta_{t-1}}_1\\
    &= \sum_{t=1}^k \norm{\eta\, \hat C\, \nabla \ell(f_{\theta_{t-1}}(x),y)}_1\\
    &\leq \sum_{t=1}^k \sqrt{n}\, \eta\, \norm{\hat C \, \nabla \ell(f_{\theta_{t-1}}(x),y)}_2\\
    &\leq \sum_{t=1}^k \sqrt{n}\, \eta\, \norm{\hat C}_2 \norm{\nabla \ell(f_{\theta_{t-1}}(x),y)}_2.
\end{align*}
Since the loss function $\ell$ is $l$-Lipschitz we have $\norm{\nabla \ell(f_{\theta_t}(x),y)}_2\leq l$ for all $t$ and consequently
\begin{align*}
    \norm{\theta_{k}-\theta_{0}}_1 &\leq k \,\eta\, l\, \sqrt{n} \norm{\hat C}_2 \leq k\, \eta\, l\, \sqrt{n} \sum_{i=1}^{n}\!\sigma_i
\end{align*}
since the operator norm is smaller than or equal to the nuclear norm.
\end{proof}

\subsection{Possible extensions and future work}
\label{subsec:extensions}
Our setting introduced in Section~\ref{subsec:notation}, immediately extends to the following scenarios:
\begin{itemize}
    \item Fully connected layers: instead of using $\mathrm{op}(\theta^{(i)})$ (the matrix corresponding to the convolution operation), one can use any other matrix as well and bound its operator norm analogously.
    \item Multivariate outputs: $\mathcal{Y}\subset\mathbb{R}^p$.
    \item Different loss functions: $\ell$ maps into $[0,M]$.
    \item Different normalization: $\|\mathrm{op}({\theta_0^{(i)}})\|_2\le 1+\nu$.
\end{itemize}
Incorporating all these modifications is straightforward by introducing further notation. In this case, the bounds from Theorem~\ref{thm:generalization_bound} only change slightly. For detailed proof arguments for these extensions, we refer the interested reader to Section~3 in  \citet{long_generalization_2020}. We further believe that our result can be extended to recurrent networks as well, using the arguments and assumptions from \citet{chen2019generalization_rnn} and we leave the details open for possible future work.

\end{document}